%% file: root.tex
\documentclass[journal]{IEEEtran}  

\IEEEoverridecommandlockouts                              

\usepackage{amsmath,amsfonts,amssymb,amsthm}
\usepackage{algorithm}
\usepackage{algpseudocode}
\usepackage{array}
\usepackage[caption=false,font=normalsize,labelfont=sf,textfont=sf]{subfig}
\usepackage{textcomp}
\usepackage{url}
\usepackage{verbatim}
\usepackage{graphicx}
\def\BibTeX{{\rm B\kern-.05em{\sc i\kern-.025em b}\kern-.08em
    T\kern-.1667em\lower.7ex\hbox{E}\kern-.125emX}}
\usepackage{balance}
\usepackage{hyperref}
\usepackage{xcolor}
\usepackage{multirow}
\usepackage{mathtools}
\usepackage{color,soul}
\usepackage{amsthm}
\usepackage{tikz}
\usetikzlibrary{arrows.meta, positioning, shapes.geometric, shapes.multipart, calc, fit, backgrounds}

\tikzset{arrow/.style={-Latex, line width=0.5pt}}

\tikzset{
  block/.style={
    draw, thick, rounded corners=1.5mm,
    align=center,
    minimum width=16mm,      
    minimum height=7mm,      
    inner xsep=2mm,          
    inner ysep=1.5mm,        
    font=\sffamily\footnotesize  
  },
  line/.style={-Latex, thick}
}

\tikzset{
  dashedbox/.style={
    draw, dashed, rounded corners=3mm, thick,
    inner sep=6pt
  }
}

\theoremstyle{plain}

\newtheorem{proposition}{Proposition}

\theoremstyle{definition}
\newtheorem{definition}{Definition}
\theoremstyle{remark}

\title{\LARGE \bf
Plan--and--Avoid: Real-Time Aircraft Trajectory Coordination in a Multi-Agent Environment}

\author{H. Emre Tekaslan$^{1}$ and Ella M. Atkins $^{2}$ and Natasha A. Neogi $^{3}$
\thanks{$^{1}$H. Emre Tekaslan is with the Kevin T. Crofton Department of Aerospace and Ocean Engineering,
        Virginia Polytechnic Institute and State University, Blacksburg, VA 24060, USA,
        {Corresponding author: \tt\small tekaslan@vt.edu}}%
\thanks{$^{2}$Ella M. Atkins is with the Kevin T. Crofton Department of Aerospace and Ocean Engineering,
        Virginia Polytechnic Institute and State University,
        Blacksburg, VA 24060, USA,
        {\tt\small ematkins@vt.edu}}%
\thanks{$^{3}$Natasha Neogi is with the NASA Langley Research Center, Hampton, VA 23681, USA,
        {\tt\small natasha.a.neogi@nasa.gov}}%
}

\begin{document}

\maketitle
\thispagestyle{empty}
\pagestyle{empty}

\input{Sections/0_Abstract}

\input{Sections/1_Intro}
\input{Sections/2_Prelim}
\input{Sections/3_Planning}
\input{Sections/4_Deconfliction}
\input{Sections/5_Algorithm}
\input{Sections/6_Application}
\input{Sections/7_Conclusion}

\bibliographystyle{IEEEtran}
\bibliography{references}


\end{document}

%% file: Sections/0_Abstract.tex
\begin{abstract}
This paper presents a real-time \textit{Plan--and--Avoid (PAA)} framework for coordinating cooperative multi-agent airspace operations around a declared priority trajectory. The priority trajectory represents an aircraft flight plan that must be preserved because of constrained maneuverability, an emergency, a mission-critical task, or assigned operational priority. The framework predicts uncertainty-aware, well-clear separation violations with surrounding traffic and, when the priority plan alone cannot maintain separation, generates vehicle-constrained unilateral advisories that modify nearby aircraft trajectories to maintain well-clear separation for all traffic. The approach is applicable to any declared priority trajectory. This paper demonstrates the \textit{Plan} component using a contingency landing planner to generate candidate priority trajectories.  PAA then identifies nearby aircraft passing too close to this priority trajectory and issues \textit{Avoid} resolution advisories to these aircraft. The framework is tested using real-world Automatic Dependent Surveillance-Broadcast (ADS-B) traffic from the Washington, D.C., airspace across more than 900 forced-landing cases, totaling over 140 hours of simulated flight. The PAA framework generates feasible cooperative advisories for all 575 unique conflict encounters, with a worst-case end-to-end response time of 5.7 s on a personal computer, including priority trajectory planning, advisory generation, and 1 s two-way datalink delay. In total, 93.5\% of generated advisories satisfy the 35 s RTCA DO--365 Detect-and-Avoid temporal threshold. These results demonstrate low-latency coordination for preserving priority trajectories while maintaining well-clear separation through real-time automated advisory generation. Future work will quantify advisory-induced delays and their operational impacts.
\end{abstract}

\begin{IEEEkeywords}
Planning, Scheduling and Coordination; Collision Avoidance; Multi-Robot Systems; Autonomous Agents
\end{IEEEkeywords}

%% file: Sections/1_Intro.tex
\section{Introduction}
\label{sec:intro}
Cooperative multi-agent autonomy has gained increasing attention because coordinated operation can improve mission efficiency, coverage, and resilience through task allocation and distributed decision making \cite{10355051,11348992}. However, as the number of agents and mission duration increase, off-nominal events and operational priority constraints become more likely to affect the broader system. An emergency, time-critical objective, or a priority constraint affecting one agent can therefore become a system-level problem rather than an isolated, single-agent event. In such cases, multi-agent coordination must account not only for the priority agent's emergency recovery, mission, or operational objective, but also for the response of surrounding agents whose trajectories may be affected by the declared priority trajectory.

This issue is especially important as small Uncrewed Aircraft Systems (UAS), Advanced Air Mobility (AAM) vehicles, and conventional aircraft increasingly operate in shared or closely coupled airspace \cite{Raza2025}. In such environments, aircraft may differ substantially in size, performance, mission objective, autonomy level, and equipage, yet must still satisfy vehicle performance limits, operational constraints, and well-clear separation requirements. Once an aircraft declares a priority trajectory with limited flexibility, surrounding cooperative traffic may need to be coordinated to preserve separation, limit conflict propagation, and minimize disruption to the broader operation. Such trajectories may arise due to emergency situations, time-critical missions, constrained approach or departure procedures, or other operational constraints.  The aforementioned trajectories may be generated by an upstream planner, operator, or traffic management system.

Motivated by this need, this work develops a real-time \textit{Plan--and--Avoid (PAA)} framework for priority trajectory coordination in cooperative multi-agent airspace. PAA first evaluates a declared priority trajectory against known multi-agent intent and predicts uncertainty-aware loss-of-well-clear (LoWC) conflicts. When separation cannot be maintained by the priority trajectory alone, PAA generates vehicle-constrained resolution advisories for affected surrounding agents while preserving the declared priority trajectory. In this way, PAA treats priority operations as system-level coordination problems rather than single-aircraft planning problems. Emergency landing is used in this paper as a rigorous case study because the priority aircraft has limited maneuver and temporal flexibility and must preserve a feasible route to a safe landing site. The resulting PAA problem lies at the intersection of priority trajectory planning and cooperative conflict resolution. The priority route should minimize avoidable conflicts when flexibility exists, and surrounding traffic must be coordinated through feasible advisories when conflicts are inevitable.

\subsection{Background and Related Work}
\label{sec:background}
Most existing work on multi-agent autonomy addresses nominal coordination problems, including task allocation \cite{liu2022multi,yan2024cooperative}, path planning \cite{hu2023multi,drones8100537}, formation control \cite{pan2021improved}, and collision avoidance \cite{rezaee2024comprehensive}. These methods are valuable for improving efficiency and safety during planned operations, but emergency or priority-driven operations remain less developed. A recent review on UAS collision avoidance \cite{10477874} similarly notes that multi-UAS strategies often lack the responsiveness required for emergency maneuvers and real-time collision avoidance, particularly in environments with moving obstacles and other drones. Ref. \cite{drones8100537} considers fault-tolerant coverage path planning for multi-UAS operations with agent failures, but does not address the downstream conflicts caused by the failure or how surrounding agents should respond. In \cite{Conde2011}, UAS conflict resolution is performed through genetic algorithm based path modification; however, the results are limited to up to seven agents and do not consider declared priority trajectories with limited maneuver flexibility. These gaps motivate a coordination framework that treats priority operations as system-level events rather than isolated single-agent replanning problems.

The literature also provides important tools for conflict-aware path planning, including dynamic artificial potential fields for real-time air traffic collision avoidance \cite{8903321}, collision avoidance under obstacle uncertainty \cite{Lai2011}, onboard and learning-enabled avoidance methods \cite{9764175,karampinis2024ensuringuavsafetyvisiononly,11127284}, sampling-based planning with dynamic obstacles \cite{Lee2022,7911207,Aoude2013}, and Markov Decision Process (MDP)-based conflict resolution or route guidance \cite{ZHANG2021103123,9681334,weng2025realtimetrafficsimulationmanagement}. Related work has also considered four-dimensional first-come, first-served path planning using Conflict-Free A$^{\star}$. However, most of these methods are formulated for nominal mission execution, where the controlled vehicle remains fully functional and can repeatedly replan toward an independently optimized objective. Declared priority trajectories change this structure. Once a priority route is declared, the route may have limited flexibility, and conflict avoidance cannot rely only on modifying the priority aircraft trajectory. The problem therefore requires not only conflict-aware priority trajectory computation, but also a downstream mechanism for coordinating surrounding traffic when conflict-free planning is not possible.

Collision avoidance and conflict resolution have been extensively studied through methods such as reciprocal velocity obstacles (RVO) \cite{van2008reciprocal,chen2024reciprocal}, optimal reciprocal collision avoidance (ORCA) \cite{van2011reciprocal}, Model-Predictive Control (MPC) \cite{8950150}, Control Barrier Function (CBF)-based safety filters \cite{10160857}, Traffic Alert and Collision Avoidance System II (TCAS-II) \cite{Munoz2013}, Airborne Collision Avoidance System X (ACAS-X) \cite{holland2013optimizing}, and learning-based policies \cite{kahn2017uncertainty}. These approaches are effective for local separation assurance around nominal task trajectories, but they are not generally designed to preserve a declared priority trajectory while coordinating heterogeneous surrounding agents. Many formulations assume reciprocal maneuvering responsibility, direct control authority over all vehicles, or simplified dynamics, which limits their applicability when agents differ in vehicle type, operational role, and maneuverability. ORCA, for example, relies on a holonomic agent model \cite{10184914}, and its non-holonomic extensions remain reciprocal, similar in spirit to Detect-and-Avoid (DAA) methods such as TCAS-II and ACAS-X. These DAA systems are developed to satisfy minimum operational standards for DAA systems such as RTCA DO--365 \cite{RTCA_DO365_2017}, which defines high-level well-clear and alerting requirements. However, such requirements do not provide a system-level coordination framework for preserving a declared priority trajectory by issuing vehicle-constrained advisories to surrounding cooperative traffic. Additionally, CBF-based methods provide reactive local safety filtering \cite{hu2022decentralized}, whereas the present problem requires higher-level advisory selection over multiple possible maneuver classes. MPC can handle prediction and constraints, but centralized MPC over all affected agents leads to a large nonlinear optimization problem \cite{richards2004decentralized,bemporad2010decentralized}.  Learning-based approaches would require broad training coverage across encounter geometries, vehicle classes, and operational constraints and would be difficult to certify. The problem studied here therefore lies between trajectory planning and conflict resolution: the priority trajectory should be preserved when possible, while cooperative surrounding aircraft receive situation and vehicle-specific advisories to restore separation and limit conflict propagation.

\subsection{Contributions and Paper Organization}
\label{sec:contributions}
This paper introduces the PAA framework for real-time well-clear separation assurance around declared priority trajectories in cooperative multi-agent airspace. The framework integrates two stages. The \textit{Plan} stage evaluates candidate priority trajectories against surrounding multi-agent intent and uncertainty-aware well-clear constraints. The \textit{Avoid} stage generates unilateral conflict resolution advisories for surrounding cooperative agents when separation cannot be maintained by the priority trajectory alone. In the emergency landing demonstration, Gradient-Guided Search \cite{tekaslan_airspace}, a contingency landing planner for wing-lift aircraft, is extended with known multi-agent intent to produce conflict-aware priority trajectories. Remaining conflicts are then passed to the advisory layer, evaluating candidate trajectory transformations using kinematic trajectory models constrained by agent-specific operational limits. Each candidate advisory is also checked for secondary conflicts introduced by the advised maneuver. The proposed framework is demonstrated using real-world airspace data to show low-latency coordination of priority trajectories in realistic multi-agent environments.

The main contributions of this paper are threefold. First, a real-time Plan--and--Avoid architecture is developed for coordinating cooperative multi-agent airspace operations around a declared priority trajectory. This architecture is the core contribution of the paper and is not a simple composition of existing planning and avoidance methods. It couples a new priority trajectory evaluation stage with a new real-time advisory-generation stage into an end-to-end coordination framework. Second, the Plan stage extends existing contingency landing planning to consider dynamic multi-agent conflicts for the first time, enabling forced landing trajectories to be evaluated against surrounding traffic intent and uncertainty-aware well-clear constraints. Third, the Avoid stage introduces a hierarchical unilateral resolution advisory module that recovers well-clear separation by modifying surrounding agent trajectories while preserving the declared priority trajectory, accounting for vehicle-specific limits and secondary conflicts introduced by the advised maneuver. The developed software is made openly available\footnote{\url{https://github.com/tekaslan/Plan-and-Avoid}}.

The paper is structured as follows. Section \ref{sec:prelim} provides background on contingency landing planning and well-clear separation. Section \ref{sec:planning} extends the well-clear prediction model to account for uncertain position estimates and develops conflict-aware discrete search-based planning in a multi-agent environment. Sections \ref{sec:deconfliction} and \ref{sec:alg} describe the rule-based conflict resolution advisories for cooperative multi-agent operations and their implementation. Section \ref{sec:application} reports validation cases, a real-world airspace case, extensive benchmarking for conflict-aware path planning, and conflict resolution for all encounters in the benchmark solution set under real-world ADS-B-based multi-agent airspace. Finally, Sections \ref{sec:disc} and \ref{sec:conc} conclude the study.

%% file: Sections/2_Prelim.tex
\section{Preliminaries}
\label{sec:prelim}
This section reviews key concepts for real-time contingency landing planning followed by definition of the well-clear air traffic separation standard used for collision avoidance.

\subsection{Contingency Landing Planning with Discrete Search}
\label{sec:search_prelim}
A discrete search operator, in general terms, expands states with admissible actions to incrementally explore the state-space and find cost minimizing course of actions to reach a goal state starting from an initial state. When considered through the lens of path planning, a state may define, for example, aircraft position, velocity, and direction of flight—quantities necessary to solve a specific path planning problem. This paper builds on an open-source 4D gradient-guided contingency landing planner developed in \cite{tekaslan_airspace, tekaslan_search, tekaslan2026airspeed}. It generates dynamically admissible trajectories to reachable landing sites while taking no-fly zones, static airspace corridors, and ground population risk exposure into account. 

Let state $s \in \mathcal{S}$ be a tuple in a state-space $\mathcal{S}\subset \mathbb{R}^4$:
\begin{equation}
    s \triangleq (\varphi, \lambda, h, \chi),
\end{equation}
where $\varphi \in \Phi$ and $\lambda \in \Lambda$ are geodesic coordinates in their respective admissible intervals. Altitude above mean sea level is given by $h \in \mathbb{R}_{\geq 0}$, and direction of flight with respect to North is $\chi \in [0,2\pi)$. A contingency landing path $\mathcal{P} \in \Pi$ is expressed as a sequenced set of discrete states $\mathcal{P} = \left(s_0,s_1,\ldots,s_g\right)$ where $s_0 \in \mathcal{S}$ and $s_g \in \mathcal{S}$ are respectively the initial emergency and goal states. Hence, $\Pi$ denotes the finite set state sequences over $\mathcal{S}$
\begin{equation}
    \Pi \subset \bigcup_{N=0}^{\infty} \mathcal{S}^{N},
\end{equation}
where $N \in \mathbb{Z}^+$ is the number of discrete states. State transitions are achieved with admissible actions. Let $a \in \mathcal{A} \subset \mathbb{R}^4$ be an action—an aircraft maneuver primitive:
\begin{equation}
    a \triangleq (\Delta\chi, \ell, \gamma, \gamma_t),
\end{equation}
where $\Delta \chi \in [\Delta\chi_{\min}, \Delta\chi_{\max}]$ is course change with scalar bounds, $\ell \in \mathbb{R^+}$ is horizontal distance between states, and $\gamma, \gamma_t \in \mathbb{R}$ are flight path angles respectively for straight and turning flights. Kinematic feasibility is satisfied with a coupling of $\Delta \chi$ and $\ell$ \cite{tekaslan2026feasibility}. Airspeed is incorporated through $\gamma_{\cdot}$ by accounting for the airspeed dynamics:
\begin{equation}
    m\frac{d}{dt}v_a = -D(v_a,\gamma_{\cdot}) - mg\sin\gamma_{\cdot}.
\end{equation}
Here, $m$ is aircraft mass, $D$ is drag force, and $g$ is gravitational acceleration. Thus, each action $a$ induces airspeed through the selection of pitch attitude. Overall, a transition from a parent state $s$ to a successor state $s^{j}$ is achieved through World Geodetic System 1984 model \cite{wgs84}, $\mathcal{W}:\mathcal{S}\times \mathcal{A}\rightarrow \mathcal{S}$, such that  $s^j = \mathcal{W}(s,a^j)$ per Figure \ref{fig:state_exp}, $j \in \mathbb{Z}^+$ is the branching factor.
\begin{figure}[t!]
    \centering    \includegraphics[width=.75\linewidth]{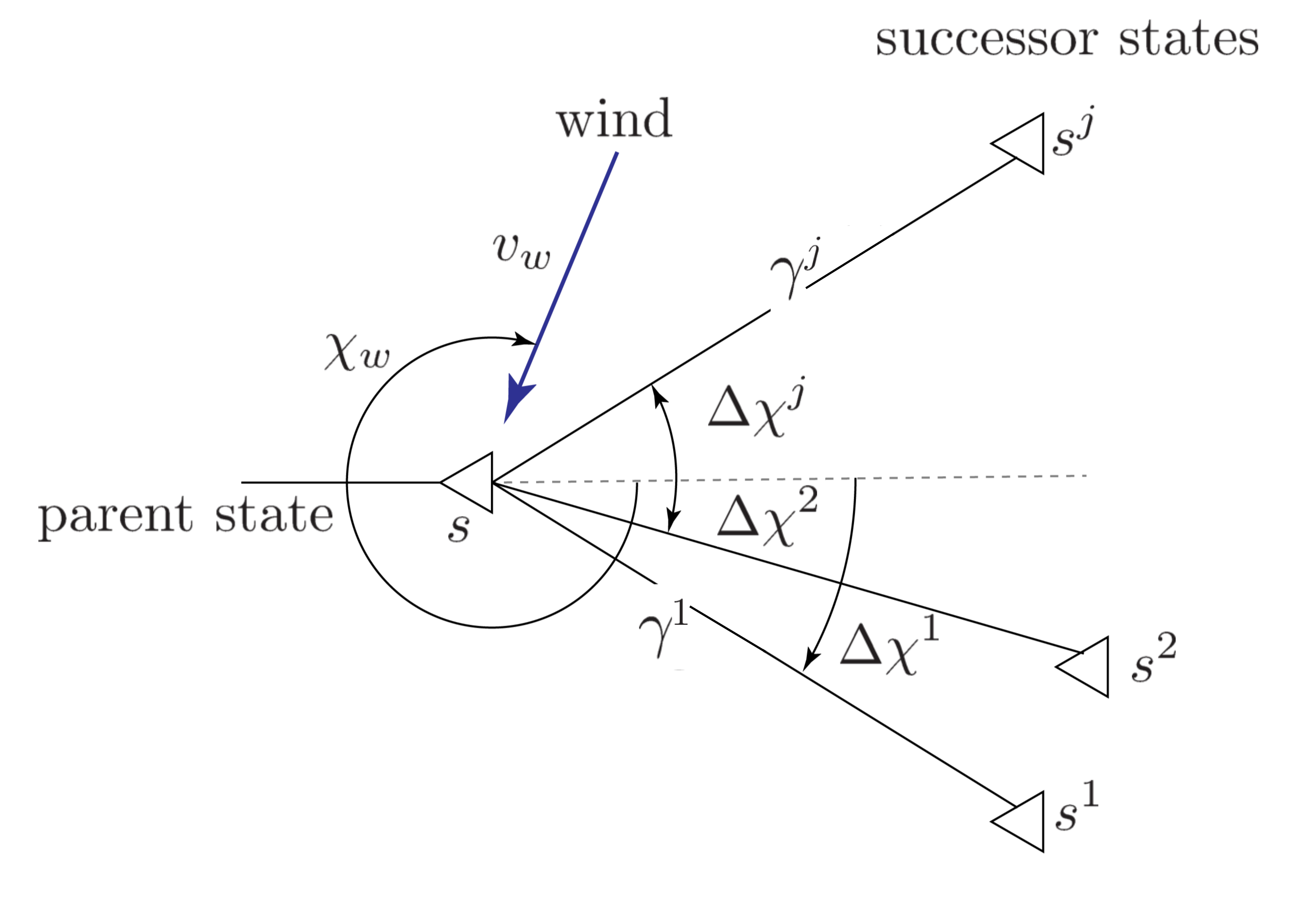}
    \caption{A representative aircraft state expansion on the horizontal plane.}
    \label{fig:state_exp}
\end{figure}

The planner operates using a priority queue (i.e., min-heap) in which the state with the minimum cost based on a cost function $f:\mathcal{S}\rightarrow\mathbb{R}_{\geq0}$ in the heap is expanded next until convergence criteria are satisfied. The cost function is designed for complete loss-of-thrust contingency landing planning, where a wing-lift aircraft is expected to glide until touchdown without thrust force. It is defined as a weighted combination of several terms that embed, for example, optimal glide behavior and landing direction through continuous costs. Additional discrete costs guide the planner away from no-fly zones, densely populated urban regions, and predefined motion corridors with elevated likelihood of multi-agent activity. The explicit expression of $f$ can be found in \cite{tekaslan_airspace}.

The contingency planning problem is defined by the tuple
\begin{equation}
    p \triangleq 
    \left(\mathcal{A}, s_0, s_g, f, v_w, \chi_w\right) \in P,
\end{equation}
with $P$ referring to problem instance space. Wind speed $v_w \in \mathbb{R}$ and direction $\chi_w \in [0, 2\pi)$ are internally used to choose optimal flight path angles for a given state transition to achieve forward-invariant airspeed within safe envelope \cite{tekaslan2026airspeed}. This coupling makes the planner dynamically informed rather than purely kinematic. The tree search operator $\mathcal{T}: P \rightarrow \Pi$ maps this planning problem from an input space $P$ to a discrete path such that $\mathcal{P} = \mathcal{T}(p)$.

\subsection{Well-clear Separation}
Aircraft separation has been studied extensively in several contexts, including air traffic management and Detect-and-Avoid. This study adopts a decoupled separation concept in the horizontal and vertical planes, similar to the studies in \cite{Cook2015, Park2014, Lee2021}, where LoWC occurs only when both horizontal and vertical separation requirements are violated simultaneously.

\begin{figure}[th!]
    \centering
    \includegraphics[width=.5\linewidth]{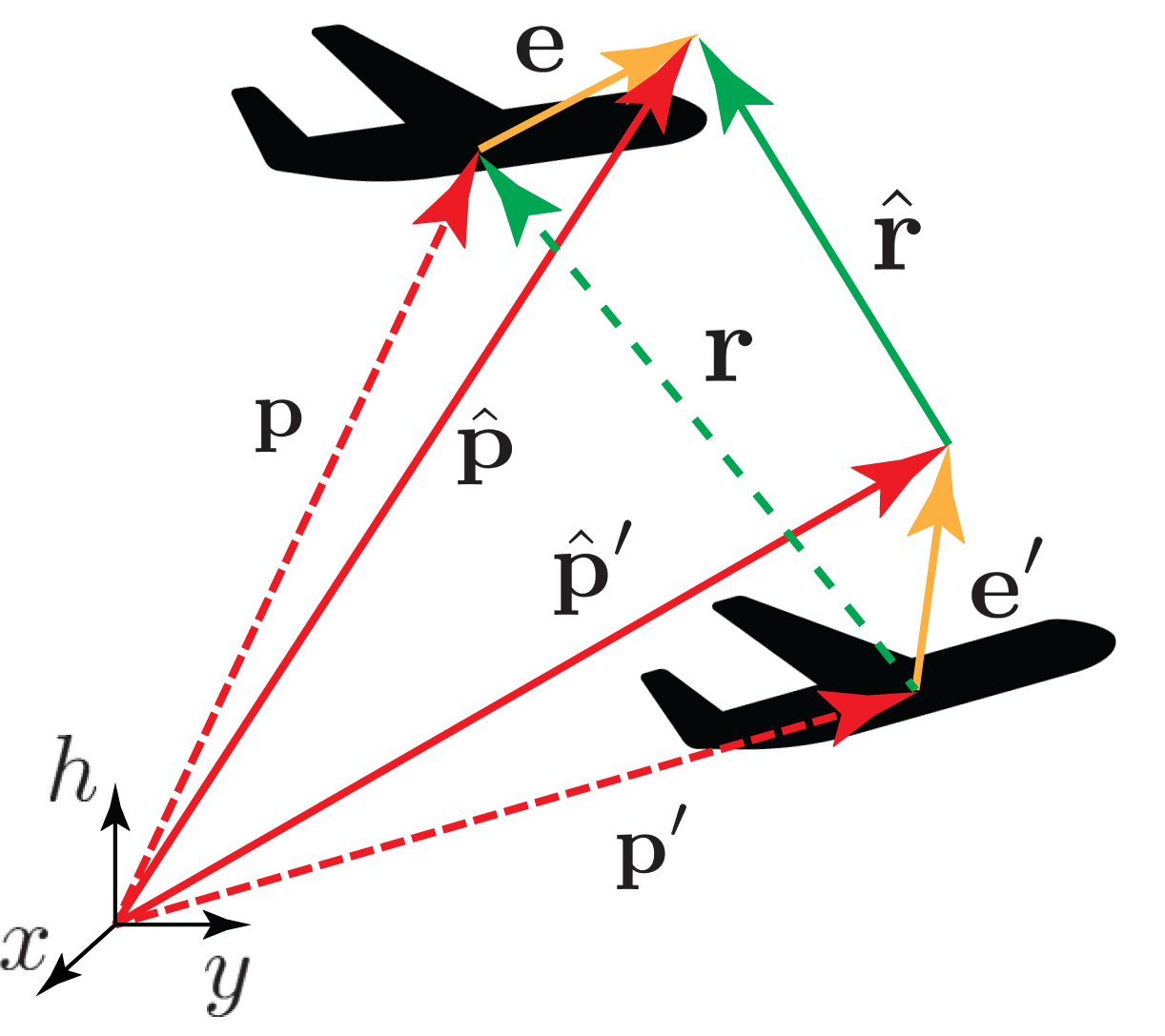}
    \caption{Aircraft pair geometry in a local coordinate frame, showing position vectors, position error vectors, and the resulting relative position vectors.}
    \label{fig:placeholder}
\end{figure}
Let the true position of ego aircraft in a local Euclidean coordinate frame be
\begin{equation}
    \mathbf{p} \triangleq \pi(s) = [x \;\; y \;\; h]^{\intercal},
\end{equation}
where \(x\) and \(y\) are local horizontal coordinates corresponding to latitude \(\varphi\) and longitude \(\lambda\), and \(h\) denotes altitude. The operator \(\pi:\mathcal{S}\rightarrow \mathbb{R}^3\) maps the aircraft state \(s\) to its position in the local coordinate frame. For an ego aircraft and an intruder aircraft, the true relative position vector is defined as
\begin{equation}
    \mathbf{r} = \mathbf{p} - \mathbf{p}^{\prime},
\end{equation}
where $\mathbf{p}^{\prime}$ is the true position of the intruder. Unless otherwise stated, unprimed symbols refer to ego quantities, while primed symbols refer to intruder quantities. The relative position can be decomposed into horizontal and vertical components as
\begin{equation}
    \mathbf{r}_H =
    \begin{bmatrix}
        x - x^{\prime} \\
        y - y^{\prime}
    \end{bmatrix},
    \qquad
    r_V = h - h^{\prime} .
    \label{eq:true_separation}
\end{equation}
\begin{definition}[Well-clear separation]
Given minimum horizontal and vertical separation thresholds $R_H \in \mathbb{R}^{+}$ and $R_V \in \mathbb{R}^{+}$, respectively, an ego--intruder aircraft pair is considered well-clear if the following conditions holds:
\begin{equation}
\lVert \mathbf{r}_H \rVert > R_H \vee |r_V| > R_V .
\end{equation}
Equivalently, LoWC occurs when both separation conditions are violated:
\begin{equation}
\lVert \mathbf{r}_H \rVert \le R_H \wedge |r_V| \le R_V .
\end{equation}
\end{definition}

The definition above assumes exact knowledge of true aircraft positions. In practice, navigation and control errors introduce uncertainty in both ego and intruder positions. Therefore, deterministic well-clear assessment must be extended to account for position uncertainty, which motivates the following section.

%% file: Sections/3_Planning.tex
\section{Contingency Landing Planning in a Multi-Agent Environment}
\label{sec:planning}
This section builds upon the deterministic well-clear separation definition with a probabilistic perspective. Stochastic quantification of position is essential for separation logic in both path planning and conflict resolution.

\subsection{Separation Assurance Under Position Uncertainty}
The purpose of this section is to convert stochastic position uncertainty into deterministic separation buffers that can be evaluated inside a real-time planner with certain confidence level. This work models position error arising from navigation and control systems as a continuous time multi-component stochastic process. Specifically, the error is confidence-bounded, and its horizontal and vertical components admit prescribed probability bounds. The analysis presented in this section does not constrain the results with certain distribution or independence assumptions. In other words, position error can arise from any distribution and be dependent or independent. 

Let an estimated position at any given time $t$ be $\hat{\mathbf{p}} \in \mathbb{R}^3$ such that
\begin{equation}
\hat{\mathbf{p}} = \mathbf{p} + \mathbf{e},
\label{eq:position_w_error}
\end{equation}
where $\mathbf{e} = [e_x \; e_y \; e_h]^{\intercal}$ is the position error vector. It is worth noting that time dependency of position and error is omitted for convenience. The horizontal and vertical error components are defined as
\begin{equation}
    \mathbf{e}_H \triangleq
    \begin{bmatrix}
        e_x\\
        e_y
    \end{bmatrix},
    \qquad
    e_V \triangleq e_h.
\end{equation}
\begin{definition}[Confidence-bounded position error]
For a prescribed horizontal confidence bound $\bar e_H \in \mathbb{R}_{\ge 0}$, the probability that the position error exceeds this bound satisfies
\begin{equation}
\mathbb{P}(\lVert \mathbf{e}_H \rVert > \bar e_H) \le p_H ,
\end{equation}
where $p_H \in [0,1)$ is the probability of horizontal error-bound violation. Similarly, for a vertical confidence bound $\bar e_V \in \mathbb{R}_{\ge 0}$,
\begin{equation}
\mathbb{P}(|e_V| > \bar e_V) \le p_V ,
\end{equation}
where $p_V \in [0,1)$ is the probability of error-bound violation.
\end{definition}
The violation probabilities $p_H$ and $p_V$ are difficult to specify analytically because the total position error is induced by several contributing sources, including state estimation and tracking errors. These error sources may be correlated, time-varying, and dependent on the operating condition. Therefore, this work treats $p_H$ and $p_V$ as design parameters that can be calibrated from empirical error statistics or selected conservatively according to the desired confidence level.

\begin{proposition}[Sufficient condition of well-clear separation under uncertainty]
\label{prop:1}
Let the true relative position between the ego and intruder aircraft be
\begin{equation}
    \mathbf{r}=\hat{\mathbf{r}}-\Delta\mathbf{e},
    \label{eq:true_rel_pos}
\end{equation}
where $\hat{\mathbf{r}}$ is the estimated relative position vector and
$\Delta\mathbf{e} \triangleq \mathbf{e} - \mathbf{e}^{\prime}$ is the relative position error with horizontal $\Delta\mathbf{e}_H$ and vertical $\Delta e_V$ components. If
\begin{equation}
    \lVert \hat{\mathbf{r}}_H \rVert >
    R_H + \lVert \Delta\mathbf{e}_H \rVert
    \quad \lor \quad
    \lvert \hat r_V \rvert
    >
    R_V + \lvert \Delta e_V \rvert,
    \label{eq:well_clear_uncertain}
\end{equation}
then ego and intruder aircraft are well-clear, i.e.,
\begin{equation}
    \lVert \mathbf{r}_H \rVert > R_H
    \quad \lor \quad
    \lvert r_V \rvert > R_V .
\end{equation}
\end{proposition}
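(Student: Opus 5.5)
The plan is to treat the disjunction in \eqref{eq:well_clear_uncertain} case by case and, in each case, convert the condition on the estimate into the corresponding condition on the true relative position with the (reverse) triangle inequality. First I will record the relation between true and estimated quantities. By \eqref{eq:position_w_error}, $\hat{\mathbf{r}} = \hat{\mathbf{p}}-\hat{\mathbf{p}}^{\prime} = \mathbf{r} + \mathbf{e}-\mathbf{e}^{\prime}$, which gives \eqref{eq:true_rel_pos}. Splitting this into components as in \eqref{eq:true_separation} gives $\mathbf{r}_H = \hat{\mathbf{r}}_H - \Delta\mathbf{e}_H$ and $r_V = \hat r_V - \Delta e_V$. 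This split is valid because the horizontal/vertical projection is linear.

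\emph{Horizontal case.} Suppose $\lVert \hat{\mathbf{r}}_H \rVert > R_H + \lVert \Delta\mathbf{e}_H \rVert$. The reverse triangle inequality gives
\[
\lVert \mathbf{r}_H \rVert = \lVert \hat{\mathbf{r}}_H - \Delta\mathbf{e}_H \rVert \ge \lVert \hat{\mathbf{r}}_H \rVert - \lVert \Delta\mathbf{e}_H \rVert > R_H .
\]
Hence the first disjunct of the well-clear definition holds.

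\emph{Vertical case.} Suppose $\lvert \hat r_V \rvert > R_V + \lvert \Delta e_V \rvert$. The same argument applied to absolute values on $\mathbb{R}$ gives $\lvert r_V \rvert \ge \lvert \hat r_V\rvert - \lvert \Delta e_V\rvert > R_V$.

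Since the hypothesis guarantees that at least one of the two cases holds, at least one disjunct of the conclusion holds. By the well-clear separation definition, the pair is therefore well-clear.

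There is no real obstacle here. The only point needing care is to note that the statement is deterministic: it holds pointwise for every realization of the error. No distributional or independence assumption is used, which is consistent with the paper's claim. The confidence bounds $\bar e_H, \bar e_V$ only enter afterwards, when $\lVert\Delta\mathbf{e}_H\rVert$ is replaced by a bound such as $\bar e_H + \bar e_H^{\prime}$ via the triangle inequality to obtain a probabilistic buffer. That step is outside this proposition.
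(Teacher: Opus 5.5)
Your proof is correct and matches the paper's argument: you split on the disjunction and apply the reverse triangle inequality separately to $\mathbf{r}_H = \hat{\mathbf{r}}_H - \Delta\mathbf{e}_H$ and $r_V = \hat r_V - \Delta e_V$. You also correctly use the reverse triangle inequality without conditions, whereas the paper adds a side assumption $\lVert \mathbf{r}_H \rVert \geq \lVert \Delta\mathbf{e}_H \rVert$ that is not needed for the argument.
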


\begin{proof}
Since $
    \mathbf{r}_H = \hat{\mathbf{r}}_H-\Delta\mathbf{e}_H,$
the reverse triangle inequality gives with an assumption of $\lVert \mathbf{r}_H \rVert \geq \lVert \Delta \mathbf{e}_H \rvert$,
\begin{equation}
    \lVert \mathbf{r}_H \rVert
    =
    \lVert \hat{\mathbf{r}}_H-\Delta\mathbf{e}_H \rVert
    \geq
    \lVert \hat{\mathbf{r}}_H \rVert
    -
    \lVert \Delta\mathbf{e}_H \rVert .
\end{equation}
Therefore, if $
    \lVert \hat{\mathbf{r}}_H \rVert
    >
    R_H+\lVert \Delta\mathbf{e}_H \rVert,
$
then
$
    \lVert \mathbf{r}_H \rVert > R_H.
$
Similarly, since
$
    r_V=\hat r_V-\Delta e_V,
$
the reverse triangle inequality gives
\begin{equation}
    \lvert r_V \rvert
    =
    \lvert \hat r_V-\Delta e_V \rvert
    \geq
    \lvert \hat r_V \rvert
    -
    \lvert \Delta e_V \rvert .
\end{equation}
Thus, if
$
    \lvert \hat r_V \rvert
    >
    R_V+\lvert \Delta e_V \rvert,
$
then
$
    \lvert r_V \rvert > R_V.
$
Hence, satisfying either inflated horizontal or inflated vertical separation is sufficient to guarantee the corresponding true separation condition for any relative position error satisfying the assumed bounds. Therefore, the aircraft pair is well-clear.
\end{proof}

\begin{proposition} [Probability bound on separation violation under inflated separation]
\label{prop:2}
Let the individual ego and intruder horizontal position errors satisfy
\begin{equation}
\mathbb{P}\left(\lVert \mathbf{e}_H \rVert > \bar e_H\right) \leq p_H,
\qquad
\mathbb{P}\left(\lVert \mathbf{e}_{H}^{\prime} \rVert > \bar e_{H}^{\prime}\right) \leq p_H^{\prime}.
\end{equation}
If the estimated horizontal separation is enforced with the inflated bound
$
\lVert \hat{\mathbf{r}}_H\rVert > R_H + \Delta\bar{e}_H,
$
where
$
\Delta\bar{e}_H \triangleq \bar e_H + \bar e_H^{\prime},
$
then the probability of violating the true horizontal separation threshold is upper-bounded by
\begin{equation}
\mathbb{P}\left(
\lVert \mathbf{r}_H \rVert \leq R_H ;
\lVert \hat{\mathbf{r}}_H\rVert > R_H + \Delta\bar{e}_H
\right)
\leq p_H + p_H^{\prime}.
\end{equation}
Similarly, for the vertical component,
\begin{equation}
\mathbb{P}\left(
\lvert r_V \rvert \leq R_V ;
\lvert \hat{r}_V\rvert > R_V + \Delta\bar{e}_V
\right)
\leq p_V + p_V^{\prime}.
\end{equation}
\end{proposition}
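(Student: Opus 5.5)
The plan is to reduce the joint event to a union of the two individual error-bound violations, using the deterministic argument of Proposition~\ref{prop:1} as the key inclusion. I read the semicolon in the statement as a joint event, $\{\lVert \mathbf{r}_H\rVert \le R_H\}\cap\{\lVert \hat{\mathbf{r}}_H\rVert > R_H+\Delta\bar e_H\}$; the same inclusion also bounds the conditional reading after multiplying by the probability of the conditioning event.

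The central step is the event inclusion
\begin{equation*}
\{\lVert \mathbf{r}_H\rVert \le R_H\}\cap\{\lVert \hat{\mathbf{r}}_H\rVert > R_H+\Delta\bar e_H\}\subseteq \{\lVert \Delta\mathbf{e}_H\rVert > \Delta\bar e_H\}.
\end{equation*}
I prove it by contraposition. Suppose $\lVert \Delta\mathbf{e}_H\rVert \le \Delta\bar e_H$. Since $\mathbf{r}_H=\hat{\mathbf{r}}_H-\Delta\mathbf{e}_H$, the reverse triangle inequality gives
\begin{equation*}
\lVert \mathbf{r}_H\rVert \ge \lVert\hat{\mathbf{r}}_H\rVert-\lVert\Delta\mathbf{e}_H\rVert > R_H+\Delta\bar e_H-\Delta\bar e_H = R_H.
\end{equation*}
This is exactly the horizontal half of the argument behind Proposition~\ref{prop:1}. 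The reverse triangle inequality $\lVert a-b\rVert\ge\lVert a\rVert-\lVert b\rVert$ holds unconditionally, so no auxiliary assumption such as $\lVert \mathbf{r}_H\rVert \ge \lVert \Delta\mathbf{e}_H\rVert$ is needed.

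Next I bound the relative error event by the individual error events. Because $\Delta\mathbf{e}_H=\mathbf{e}_H-\mathbf{e}_H'$, the triangle inequality gives $\lVert\Delta\mathbf{e}_H\rVert\le\lVert\mathbf{e}_H\rVert+\lVert\mathbf{e}_H'\rVert$. If $\lVert\Delta\mathbf{e}_H\rVert>\bar e_H+\bar e_H'$, then at least one of $\lVert\mathbf{e}_H\rVert>\bar e_H$ or $\lVert\mathbf{e}_H'\rVert>\bar e_H'$ must hold, since otherwise the sum would be at most $\Delta\bar e_H$. This gives
\begin{equation*}
\{\lVert \Delta\mathbf{e}_H\rVert > \Delta\bar e_H\}\subseteq\{\lVert \mathbf{e}_H\rVert>\bar e_H\}\cup\{\lVert \mathbf{e}_H'\rVert>\bar e_H'\}.
\end{equation*}
Applying monotonicity of probability and Boole's (union) bound, with the assumed bounds $p_H$ and $p_H'$, yields $p_H+p_H'$. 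The union bound requires no independence or distributional assumptions, which matches the paper's stated generality.

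The vertical case repeats the argument with absolute values in place of norms, using $r_V=\hat r_V-\Delta e_V$ and $\Delta\bar e_V=\bar e_V+\bar e_V'$. This gives the bound $p_V+p_V'$.

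The main obstacle is interpretive rather than technical: fixing the meaning of the ``;'' notation and ensuring that the inclusion in the first step is stated between events on a common probability space, so that the estimated and true quantities are random variables jointly defined with the errors. Once that is settled, the proof is the deterministic inclusion followed by the union bound.
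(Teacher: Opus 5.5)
Your proposal is correct and follows the paper's proof essentially step for step. The paper also derives $\{\lVert\Delta\mathbf{e}_H\rVert>\Delta\bar e_H\}\subseteq\{\lVert\mathbf{e}_H\rVert>\bar e_H\}\cup\{\lVert\mathbf{e}_{H}^{\prime}\rVert>\bar e_{H}^{\prime}\}$ by the triangle inequality and contraposition, applies the union bound, and uses the contrapositive of the Proposition~\ref{prop:1} implication to show that, under the enforced inflated bound, true loss of separation forces the relative-error violation; you only present the two inclusions in the opposite order, and you are right that the reverse triangle inequality needs no side assumption such as $\lVert\mathbf{r}_H\rVert\ge\lVert\Delta\mathbf{e}_H\rVert$. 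One caution on your side remark: multiplying a conditional probability by the probability of the conditioning event just recovers the joint probability, so it does not give $p_H+p_H^{\prime}$ for a genuinely conditional reading unless the error-bound violation probabilities also hold conditionally on that event---for instance when $\hat{\mathbf{r}}_H$ is the planned, deterministic relative position so the enforced event has probability one, which is effectively how the paper uses it.
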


\begin{proof}
First, the triangle inequality provides a conservative confidence bound for the relative horizontal position error:
\begin{equation}
\begin{aligned}
\lVert \Delta \mathbf{e}_H \rVert
&= \lVert \mathbf{e}_H - \mathbf{e}_{H}^{\prime} \rVert \\
&\leq \lVert \mathbf{e}_H \rVert
+ \lVert \mathbf{e}_{H}^{\prime} \rVert \\
&\leq \bar e_H + \bar e_{H}^{\prime}
= \Delta \bar e_H .
\end{aligned}
\end{equation}
Therefore, boundedness of the individual ego and intruder position errors implies boundedness of the relative error:
\begin{equation}
  \lVert \mathbf{e}_H \rVert \leq \bar e_H
  \land
  \lVert \mathbf{e}_{H}^{\prime} \rVert \leq \bar e_{H}^{\prime}
  \implies
  \lVert \Delta \mathbf{e}_H \rVert \leq \Delta \bar e_H .
  \end{equation}
  Using contraposition and De Morgan's law, one obtains
  \begin{equation}
  \lVert \Delta \mathbf{e}_H \rVert > \Delta \bar e_H
  \implies
  \lVert \mathbf{e}_H \rVert > \bar e_H
  \lor
  \lVert \mathbf{e}_{H}^{\prime} \rVert > \bar e_{H}^{\prime}.
  \label{eq:deh_exceeds}
  \end{equation}
  Eq. \eqref{eq:deh_exceeds} provides a conservative upper bound on the probability of relative-error-bound violation. By the union bound,
  \begin{equation}
  \begin{aligned}
  \mathbb{P}\left(
  \lVert \Delta \mathbf{e}_H \rVert > \Delta \bar e_H
  \right)
  &\leq
  \mathbb{P}\left(
  \lVert \mathbf{e}_H \rVert > \bar e_H
  \lor
  \lVert \mathbf{e}_{H}^{\prime} \rVert > \bar e_{H}^{\prime}
  \right)\\
  &\leq
  \mathbb{P}\left(
  \lVert \mathbf{e}_H \rVert > \bar e_H
  \right)
  +
  \mathbb{P}\left(
  \lVert \mathbf{e}_{H}^{\prime} \rVert > \bar e_{H}^{\prime}
  \right)\\
  & =
  p_H+p_{H}^{\prime}.
  \end{aligned}
  \label{eq:prob_eh_greater}
  \end{equation}
Recall from Proposition \ref{prop:1} that
\begin{equation}
\lVert \hat{\mathbf{r}}_H\rVert > R_H + \Delta\bar{e}_H
\land
\lVert \Delta\mathbf{e}_H\rVert \leq \Delta\bar{e}_H
\implies
\lVert \mathbf{r}_H \rVert > R_H .
\end{equation}
Applying contraposition and De Morgan's law once again,
\begin{equation}
\lVert \mathbf{r}_H \rVert \leq R_H
\implies
\lVert \hat{\mathbf{r}}_H\rVert \leq R_H + \Delta\bar{e}_H
\lor
\lVert \Delta\mathbf{e}_H\rVert > \Delta\bar{e}_H .
\end{equation}
Under the enforced inflated-separation condition
$\lVert \hat{\mathbf{r}}_H\rVert > R_H + \Delta\bar{e}_H$, the first condition on the right-hand side cannot hold. Hence, loss of true horizontal separation can occur only if the relative horizontal error exceeds its prescribed bound:
\begin{equation}
\begin{aligned}
\mathbb{P}\left(
\lVert \mathbf{r}_H \rVert \leq R_H ;
\lVert \hat{\mathbf{r}}_H\rVert > R_H + \Delta\bar{e}_H
\right)
&\leq
\mathbb{P}\left(
\lVert \Delta \mathbf{e}_H \rVert > \Delta \bar e_H
\right)\\
&= p_H+p_H^{\prime}.
\end{aligned}
\end{equation}
The same argument applies to the vertical separation, yielding
\begin{equation}
\mathbb{P}\left(
\lvert r_V \rvert \leq R_V ;
\lvert \hat{r}_V\rvert > R_V + \Delta\bar{e}_V
\right)
\leq p_V + p_V^{\prime}.
\end{equation}
\end{proof}

Practically, Proposition \ref{prop:2} allows the planner and advisory generator to use the same inflated well-clear logic without assuming a specific error distribution. If the actual relative error remains within this buffer, then true separation is guaranteed. Otherwise, the only way the aircraft can lose true separation despite satisfying the inflated separation constraint is if the relative position error exceeds its prescribed confidence bound. Thus, the probability of loss of separation is upper-bounded by the probability of relative error bound violation.

\subsection{Real-time Path Planning in a Multi-Agent Environment}
Propositions \ref{prop:1} and \ref{prop:2} have useful implications for real-time contingency planning in a cooperative multi-agent environment. First, they enable enforcement of constant separation buffers using estimated relative positions while providing a confidence guarantee on true separation. Second, they convert position uncertainty into deterministic inflated separation constraints that can be checked efficiently inside a real-time planner. Third, the result is conservative and distribution-free, requiring only confidence bounds on individual position errors rather than probability models or independence assumptions.

Recall from Section \ref{sec:search_prelim}, state expansion in discrete search is determined by a cost function $f(s)$. Search methods with optimality guarantee such as A$^\star$ employs  cumulative cost $g(s)$ and an admissible heuristic cost-to-go $h(s)$ with total cost $f(s) = g(s) + h(s)$. The planner used in this study also calculates the total cost of a state with the same approach whereas $h(s)$ is inadmissible by design to achieve the real-time performance required in contingency planning. The direct implication of sub-optimality within the context of contingency path planning with collision avoidance is that the resulting path would reduce the number of encounters and duration of LoWC but does not guarantee a conflict-free solution, if it exists. Aside from an inadmissible heuristic function, degraded maneuverability may also result in infeasible solutions state-space exploration is inherently degraded. This limitation is addressed in the following section with a complementary deconfliction method yet, first, a cumulative path cost based on Proposition \ref{prop:1}1 is introduced for real-time conflict-aware contingency landing trajectory generation.

Dynamic obstacle avoidance in planning introduces a temporal component to the solution. Thus, the aircraft state introduced in \ref{sec:search_prelim} is revised to avoid additional notation to include timestamp
\begin{equation}
    s = (\varphi, \lambda, h, \chi, t),
    \label{eq:state}
\end{equation}
enabling spatiotemporal planning with $s \in \mathcal{S} \subset \mathbb{R}^5$. A contingency landing path $\mathcal{P}$ is interpreted as a continuous time-parametrized trajectory $\tau: [0,T] \rightarrow \mathcal{S}$ such that $\tau(t) = s$ is the state corresponding to the time instance in planning horizon $[0,T]$ for conflict prediction and separation assessment.

The ego and intruder trajectories are denoted by $\tau,\tau'\in\mathbb{T}$, respectively. The notation $\mathbb{T}^{\prime} \subset \mathbb{T}$ denotes the set of nominal intruder trajectories. The corresponding time-synchronized states are $\tau(t) = s,\;\tau^{\prime}(t) = s^{\prime}, \forall t.$ Additionally, vector notations used in the previous sections are extended to indicate explicit time dependency. For example,
\begin{equation}
    \hat{\mathbf{r}}(t) = \pi(s) - \pi(s^{\prime}) = \pi(\tau(t)) - \pi(\tau^{\prime}(t)).
    \label{eq:rel_pos_time}
\end{equation}

\begin{definition}[Uncertainty-deflated separation]
The uncertainty-deflated horizontal and vertical separations are defined as
\begin{equation}
\delta_H(t) \triangleq \max \left(0, \lVert \hat r_H(t)\rVert - \Delta \bar e_H \right),
\end{equation}
and
\begin{equation}
\delta_V(t) \triangleq \max \left(0, |\hat r_V(t)| - \Delta \bar e_V \right).
\end{equation}
These quantities account for the worst-case relative position error within the prescribed confidence bounds and are used to evaluate well-clear separation under position uncertainty.
\end{definition}
The planner evaluates separation using these conservative metrics.
Here, $\delta_H(t)$ and $\delta_V(t)$ are not raw separations; rather, they are uncertainty-deflated separations that account for the worst-case relative position error within the prescribed confidence bounds as illustrated in Figure \ref{fig:separations}.
\begin{figure}[t!]
    \centering
    \includegraphics[width=\linewidth]{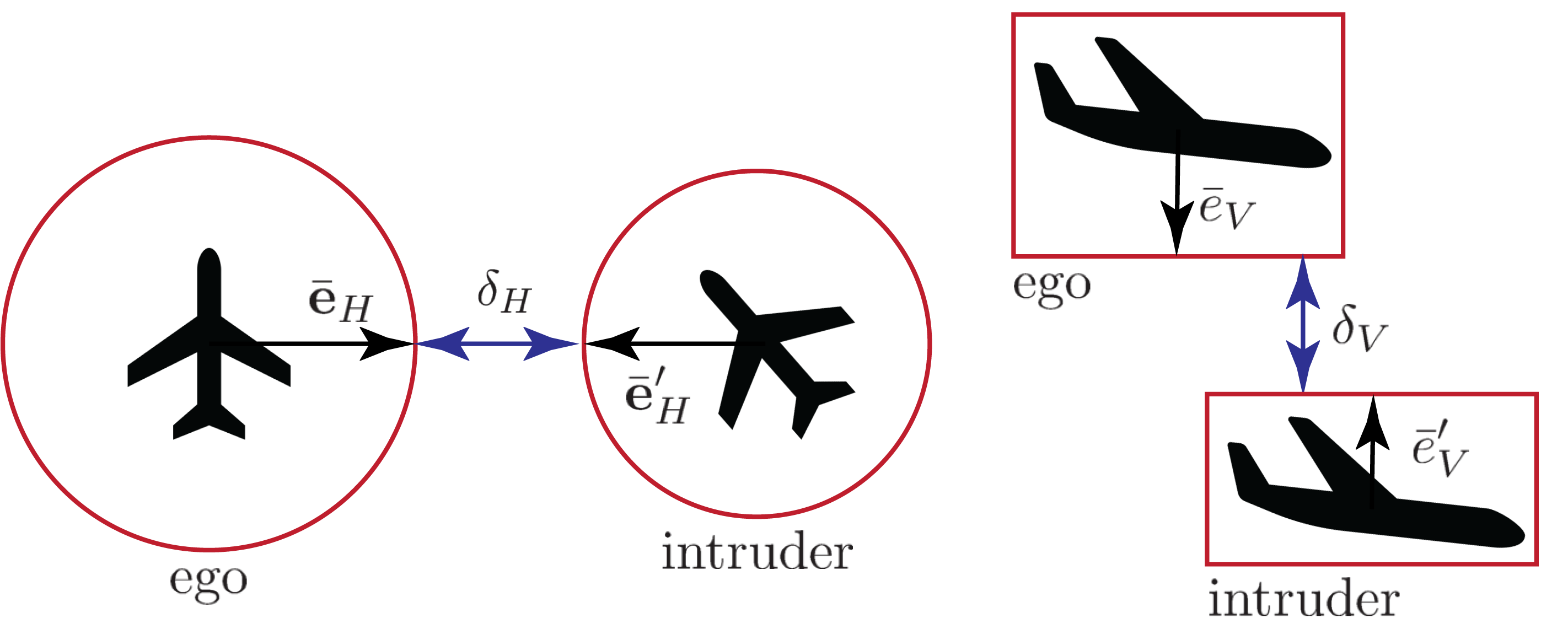}
    \caption{Instantaneous horizontal and vertical separations.}
    \label{fig:separations}
\end{figure}

Using Proposition \ref{prop:1}, the inflated estimated separation condition
can be equivalently checked as
$
    \delta_H(t)>R_H
    \quad \lor \quad
    \delta_V(t)>R_V .
$
Thus, LoWC occurs only when both uncertainty-deflated separations are below their corresponding thresholds
$
    \delta_H(t)\leq R_H
    \quad \land \quad
    \delta_V(t)\leq R_V .
$
\begin{definition}[Loss-of-well-clear set under uncertainty]
The set of state pairs corresponding to LoWC under uncertainty-deflated separation is defined as
\begin{equation}
C_{\mathrm{LoWC}} =
\{(s,s') \mid \delta_H \le R_H \wedge \delta_V \le R_V\}.
\end{equation}
\end{definition}
To penalize such violations during search, horizontal and vertical signed
separation margins are respectively defined as
\begin{equation}
\begin{aligned}
    m_H(t)
    &\triangleq
    \min\left(
    0,\,
    \delta_H(t)-R_H
    \right),\\
    m_V(t)
    &\triangleq
    \min\left(
    0,\,
    \delta_V(t)-R_V
    \right).
\end{aligned}
\end{equation}
The margins $m_H$ and $m_V$ are zero when the corresponding separation requirement is satisfied and negative when the requirement is violated. The planner assigns a smooth penalty based on the magnitudes of these violations. Normalized penalty functionals for horizontal and vertical LoWC are
\begin{equation}
\begin{aligned}
    J_H(t)
    &\triangleq
    1 - \frac{2}{1+\exp(k_H |m_H(t)|)},\\
    J_V(t)
    &\triangleq
    1 - \frac{2}{1+\exp(k_V |m_V(t)|)},
    \end{aligned}
    \label{eq:Jv}
\end{equation}
where $k_H>0$ and $k_V>0$ control the sensitivity of the cost to horizontal and vertical separation violations. For this study, $k_H = 1$ and $k_V = 0.005$ are used. The resulting cost between the ego trajectory and a given intruder trajectory is
\begin{equation}
    J(t \mid \tau,\tau^{\prime})
    \triangleq
    J_H(t)J_V(t).
    \label{eq:Jcpa}
\end{equation}
This multiplicative structure reflects the well-clear definition. LoWC occurs only when both horizontal and vertical separation requirements are violated. Therefore, if either component returns to a well-clear state, the corresponding margin becomes zero and $J$ vanishes. The explicit dependence on the trajectory pair $(\tau,\tau^{\prime})$ in Eqs. \eqref{eq:rel_pos_time}--\eqref{eq:Jv} is omitted for convenience. Eventually, the cumulative loss of separation cost $g:\mathcal{S}\rightarrow\mathbb{R}_{\geq 0}$ is defined as the time integral of the maximum cost over all intruders along the candidate contingency landing trajectory terminating at state $s$:
\begin{equation}
    g(s) \triangleq \int_{0}^{t_s} \max_{\tau^{\prime} \in \mathbb{T}^{\prime}} J(t \mid \tau, \tau^{\prime})dt,
    \label{eq:cumul_lowc_risk}
\end{equation}
where $t_s$ denotes time at state $s$.

By definition, evaluating $g(s)$ requires knowledge of future intruder states over the planning horizon. This is reasonable in cooperative multi-agent systems, where agents either exchange planned trajectories directly or submit operational intent to a shared coordination service \cite{10.1145/3128584}. Therefore, this work assumes that intruder intent, specifically position with timestamps, is available over the finite prediction horizon, with position uncertainty taken into account through the inflated separation bounds derived in the previous section.

\begin{definition}[Conflict set]
When LoWC is predicted, the planner stores conflicting intruder trajectories into $\mathbb{T}^{\prime}_c \subseteq \mathbb{T}^{'}$,
\begin{equation}
\mathbb{T}^{\prime}_c \triangleq \{\tau^{\prime} \in \mathbb{T}^{\prime} \mid \exists t \in [0,T] \text{ such that } (\tau(t), \tau'(t)) \in C_{\mathrm{LoWC}}\}.
\end{equation}
Therefore, conflict encounters of $\tau$ must be resolved $\forall \tau' \in \mathbb{T}^{\prime}_c$.
\end{definition}

%% file: Sections/4_Deconfliction.tex
\section{Conflict Resolution Advisory}
\label{sec:deconfliction}
The conflict resolution advisory algorithm operates on the priority ego landing plan and predicted intruder (cooperative traffic) trajectories. While the planner determines whether an emergency trajectory is feasible or costly with respect to surrounding traffic, the advisory module determines how surrounding traffic paths should be modified in real-time to remove any remaining predicted conflicts. Because the ego aircraft may have degraded maneuverability and/or low energy reserves, its landing plan is prioritized and kept unchanged.

This work proposes a hierarchical unilateral advisory generation framework that modifies intruder trajectories to recover spatiotemporal clearance and maintain well-clear separation. The hierarchy is designed to accommodate both wing-lift and Vertical Take-off and Landing (VTOL) operations across different flight phases, from take-off to landing. The admissible resolution actions are defined as follows:
\begin{enumerate}
    \item \textbf{Halt} advisory, $\mathcal{R}_{\mathrm{Halt}}$, prevents an intruder from taking off when the aircraft is still on the ground at the beginning of advisory generation. This advisory preserves spatial separation by keeping the intruder outside the multi-agent activity and is intended to be issued through datalink communication from the advisory generation module to the affected aircraft.

    \item \textbf{Speed} advisory, $\mathcal{R}_{\mathrm{Speed}}$, modifies the intruder's nominal ground speed profile to achieve temporal separation while preserving the nominal route geometry.

    \item \textbf{Altitude} advisory, $\mathcal{R}_{\mathrm{Alt}}$, modifies the intruder's nominal altitude profile with vertical offset, thereby achieving spatial separation in the vertical plane.

    \item \textbf{Extend} advisory, $\mathcal{R}_{\mathrm{Extend}}$, modifies a local segment of the intruder's nominal trajectory by increasing path length through a lateral maneuver. This advisory can achieve spatial separation, temporal separation, or both.

    \item \textbf{Divert} advisory, $\mathcal{R}_{\mathrm{Divert}}$, generates a new trajectory from the intruder's current route to a known holding point. This is the most intrusive advisory and is used when local timing, vertical, or lateral modifications are insufficient.
\end{enumerate}
\begin{definition}[Resolution advisory]
The set of admissible resolution actions is defined as
\begin{equation}
\mathcal{R} = \{\mathcal{R}_{\mathrm{Halt}}, \mathcal{R}_{\mathrm{Speed}}, \mathcal{R}_{\mathrm{Alt}}, \mathcal{R}_{\mathrm{Extend}}, \mathcal{R}_{\mathrm{Divert}}\}.
\end{equation}
Each advisory is represented by a trajectory transformation operator $\mathcal{R}_{\cdot}: \mathbb{T}'_c \times \mathbb{R}^{n} \rightarrow \mathbb{T}$ such that
\begin{equation}
\tau'_{\mathrm{adv}} = \mathcal{R}{\cdot}(\tau',\mathbf{x}),
\end{equation}
where $\tau'_{\mathrm{adv}} \in T$ is the advised conflict-free intruder trajectory, and $\mathbf{x} \in \mathbb{R}^{n}$ is the advisory-specific decision vector with dimension $n$ that depends on the advisory type.
\end{definition}

As introduced in the previous section, the time horizon for planning is $[0,T]$ starting from contingency onset at $t = 0$ to ego landing $t = T$. Since resolution advisory generation requires outputs from contingency landing planning, the methods execute sequentially. Thus, advisory time horizon is $t \in (0,T]$ as the advisory onset $t_{\mathrm{on}} > t_0 = 0$. This algorithm resolves conflicts without jeopardizing existing multi-agent activity by requiring minimal changes to nominal intruder trajectories. Hence, all advisories except $\mathcal{R}_{\mathrm{Halt}}$ are formulated as quadratic programming problems. 

Before formally defining individual advisories, two constraint functions are introduced as they are common across the advisory solutions. The first constraint enforces well-clear separation with the distressed ego aircraft. Since the contingency landing trajectory is generated in real time using a nominal time parameterization, the ego may be slightly ahead of or behind the planned trajectory because of speed-tracking error during execution. To account for this factor without propagating the ego dynamics during advisory generation, the ego ground speed component along the planned trajectory is assumed to satisfy $0 < v_e^{-} \leq v_e^0 \leq v_e^{+}$, where $v_e^{-}$ and $v_e^{+}$ denote the lower and upper admissible ego ground speed bounds, respectively, and $v_e^0$ is the nominal speed used to time-parameterize the planned contingency trajectory. The lower along-track ground speed bound is assumed to be strictly positive so that the aircraft progresses monotonically along the planned trajectory. If wind causes the along-track ground speed component to become nonpositive, the nominal trajectory tracking assumption no longer holds and the trajectory is treated as infeasible rather than as backward traversal of the path.

At an advisory time $t$, the possible ego position is represented by a time window $\mathcal{W}_e:[0,T]\rightarrow \mathbb{I}_{[0,T]}$ along the nominal contingency trajectory,
\begin{equation}
\mathcal{W}_e(t) = \left[
\frac{v_e^{-}}{v_e^0}t,\;
\min\left(T,\frac{v_e^{+}}{v_e^0}t\right)
\right],
\label{eq:ego_time_window}
\end{equation}
where $\mathbb{I}_{[0,T]}$ denotes the set of closed intervals contained in [0,T].
The lower bound corresponds to the slowest admissible ego progression along the path, while the upper bound corresponds to the fastest admissible ego progression. Therefore, instead of comparing $\tau(t)$ and $\tau^{\prime}(t)$ at the same nominal time index, the intruder state at time $t$ is checked against all ego states $\tau(\xi)$ with $\xi \in \mathcal{W}_e(t)$. The robust well-clear constraint function $\psi_1:\mathbb{T}\times\mathbb{T}\rightarrow\mathbb{R}$ is then defined as
\begin{equation}
\psi_1(\tau,\tau^{\prime}) = \max_{t\in(0,T]}
\max_{\xi\in\mathcal{W}_e(t)}
\min
\left(
R_H-\delta_H(\xi),
R_V-\delta_V(\xi)
\right),
\label{eq:const_1}
\end{equation}
where $\delta_H(\xi)$ and $\delta_V(\xi)$ are the horizontal and vertical separations between the ego state $\tau(\xi)$ and the intruder state $\tau^{\prime}(t)$ in a time window $\xi\in\mathcal{W}_e(t)$, respectively. The two terms inside the minimum expression represent the instantaneous horizontal and vertical separation deficits. A positive deficit indicates that the uncertainty-deflated separation is below the required threshold. The minimum of the two deficits is positive only during simultaneous horizontal and vertical separation violation. The inner maximization over $\xi\in\mathcal{W}_e(t)$ returns the worst-case separation deficit over the admissible ego timing uncertainty at each advisory time. The outer maximization then returns the largest robust LoWC constraint value over the advisory horizon. Consequently, enforcing
$
\psi_1(\tau,\tau^{\prime}) \leq 0
$
requires well-clear separation to be maintained for all admissible ego timing realizations induced by the bounded speed uncertainty. When $v_e^{-}=v_e^0=v_e^{+}$, the time window in Eq. \eqref{eq:ego_time_window} collapses to $\mathcal{W}_e(t)={t}$, and Eq. \eqref{eq:const_1} reduces to the nominal same-time well-clear constraint.

Unlike the well-clear condition between the distressed ego aircraft and an intruder, the same separation requirement may be intentionally relaxed for nominal multi-agent operations due to confidence in full control authority and requirements of missions performed in close proximity. Therefore, the well-clear metric defined previously is not directly imposed between nominally performing intruders. Instead, the advisory is required to not increase the interaction risk of the modified intruder trajectory with respect to the remaining nominal agents. Let $I_J:\mathbb{T}^{\prime}\rightarrow\mathbb{R}_{\geq 0}$ denote the cumulative interaction-risk functional for an intruder trajectory. For a nominal intruder trajectory $\tau_i^{\prime}$, this functional is defined as
\begin{equation}
    I_J = I_J(\tau_i^{\prime}) = \int_0^T \max_{\tau_j^{\prime}\in\mathbb{T}^{\prime},\,j\neq i}
    J\left(t\mid \tau_i^{\prime},\tau_j^{\prime}\right)
    dt.
\end{equation}
After the advisory, the corresponding cumulative value is
\begin{equation}
    I_J^{\mathrm{adv}} = I_J(\tau_{i,\mathrm{adv}}^{\prime}).
\end{equation}
Similarly, let $J_{\max}:\mathbb{T}^{\prime}\rightarrow\mathbb{R}_{\geq 0}$ denote the worst-case instantaneous interaction-risk functional. For a nominal intruder trajectory $\tau_i^{\prime}$, it is defined as
\begin{equation}
    J_{\max} = J_{\max}(\tau_i^{\prime})
    =
    \max_{t\in(0,T]}
    \max_{\tau_j^{\prime}\in\mathbb{T}^{\prime},\,j\neq i}
    J\left(t\mid \tau_i^{\prime},\tau_j^{\prime}\right).
\end{equation}
After applying an advisory, the corresponding worst-case instantaneous
interaction risk is
\begin{equation}
    J_{\max}^{\mathrm{adv}}
    =
    J_{\max}(\tau_{i,\mathrm{adv}}^{\prime}).
\end{equation}
Then, the interaction constraint $\psi_2:\mathbb{T}^{\prime} \times \mathbb{T}^{\prime} \rightarrow \mathbb{R}$ is
\begin{equation}
    \psi_2(\tau_{i}^{\prime}, \tau_{i,\mathrm{adv}}^{\prime}) =
    \max\left( I_J^{\mathrm{adv}} - I_J, J_{\max}^{\mathrm{adv}} - J_{\max} \right).
\end{equation}
Thus, enforcing $ \psi_2(\tau_{i}^{\prime}, \tau_{i,\mathrm{adv}}^{\prime}) \leq 0$ is equivalent to requiring
\begin{equation}
    I_J(\tau_{i,\mathrm{adv}}^{\prime}) \leq I_J(\tau_i^{\prime}),
    \quad
    J_{\max}(\tau_{i,\mathrm{adv}}^{\prime}) \leq J_{\max}(\tau_i^{\prime}).
\end{equation}
The first inequality prevents the advisory from increasing the cumulative interaction risk over the advisory horizon, whereas the second prevents the advisory from increasing the worst-case instantaneous interaction risk. Hence, the modified intruder trajectory is allowed only if it resolves the conflict with the ego without increasing the total exposure and the peak interaction risk with the remaining multi-agent activity. 

Furthermore, optimal decision vectors $\mathbf{x}^{\star}$ for minimally intrusive advisories are found by solving
\begin{equation}
\begin{gathered}
   \mathbf{x}^{\star} = \arg\min_{\mathbf{x} \in \mathbb{X}} \mathbf{x}^{\intercal}\mathbf{x}\\
   \mathrm{s.t.}\; \tau^{\prime}_{\mathrm{adv}} = \mathcal{R}_{\cdot}(\tau', \mathbf{x}),\quad \tau' \in \mathbb{T}'_c\\
   \psi_1(\tau,\tau^{\prime}_{\mathrm{adv}}) \leq0,\;\psi_2(\tau^{\prime}, \tau^{\prime}_{\mathrm{adv}}) \leq 0, \quad \tau^{\prime} \in \mathbb{T}^{\prime}\\
\end{gathered}
\label{eq:adv_opt}
\end{equation}
where $\mathbb{X}$ is the admissible decision vector space specified for each advisory. In the following sections, each individual advisory is formally defined.

\subsection{Halt}
The halt advisory $\mathcal{R}_{\mathrm{Halt}}$ is the only resolution action not formulated as an optimization problem. It is available only to intruders that are still on the ground, regardless of aircraft type. Since the intruder has not yet entered the active airspace, $\mathcal{R}_{\mathrm{Halt}}$ delays its departure until the distressed ego no longer conflicts with original intruder intent. Therefore, $\mathcal{R}_{\mathrm{Halt}}$ preserves the nominal path geometry and speed profile of the intruder, but shifts the entire trajectory forward in time.

Let $\Delta t_{\mathrm{Halt}}\in\mathbb{R}^+$ denote the commanded ground hold or delay duration. The halt advisory transforms the nominal intruder trajectory $\tau^{\prime}$ into the advised trajectory $\tau_{\mathrm{adv}}^{\prime}$ by
\begin{equation}
    \tau_{\mathrm{adv}}^{\prime}(t) = \tau^{\prime}(t-\Delta t_{\mathrm{Halt}}), \quad t\in(\Delta t_{\mathrm{Halt}},T],
\end{equation}
with the intruder remaining at its initial ground state during the holding
interval $\tau_{\mathrm{adv}}^{\prime}(t)= \tau_i^{\prime}(0), t\in[0,\Delta t_{\mathrm{Halt}}].$ Equivalently, the transformation can be written compactly as $\tau^{\prime}_{\mathrm{adv}} = \mathcal{R}_{\mathrm{Halt}} \left(\tau',\Delta t_{\mathrm{Halt}}\right)$. The holding duration is selected as the smallest nonnegative time shift that satisfies the well-clear constraint with the distressed ego trajectory. Since the intruder remains on the ground during the advisory, the halt action does not introduce additional airborne interaction risk with the remaining nominal traffic. Thus, unlike the subsequent advisories, it can be evaluated directly by time shifting the nominal trajectory rather than by solving a quadratic program.

\subsection{Speed}
The advisory $\mathcal{R}_{\text{Speed}}$ resolves a predicted conflict by modifying the intruder's speed profile while preserving its path geometry. We first define a saturation operator 
\begin{equation}
    \operatorname{sat}_{[\underline{x},\overline{x}]}(x) = \min \left ( \max(x,\underline{x}), \overline{x} \right ),
\end{equation}
that bounds given scalar input into the interval $[\underline{x},\overline{x}]$, $\underline{x} < \overline{x}$.

Let $\Delta v_{\mathrm{adv}} \in \mathbb{R}$ be the magnitude of a ground speed change advisory. A speed offset in time $\Delta v:(0,T] \rightarrow \mathbb{R}$ to the nominal speed trajectory is achieved by means of ramp-hold-return profile given vehicle-specific acceleration and deceleration:
\begin{equation}
\begin{aligned}
    \Delta v(t)
    =
    \Delta v_{\mathrm{adv}}
    \bigg (
    &\operatorname{sat}_{[0,1]}
    \left(
    \frac{t-t_{\mathrm{on}}}{\Delta t_{\mathrm{acc}}}
    \right)- \\
    &\operatorname{sat}_{[0,1]}
    \left(
    \frac{t-t_{\mathrm{on}}-\Delta t_{\mathrm{acc}}-t_{\mathrm{hold}}}{\Delta t_{\mathrm{dec}}}
    \right)
    \bigg ).
\end{aligned}
\end{equation}
Here, $t_{\mathrm{hold}} \in \mathbb{R}^+$ denotes the duration over which the advised speed offset is maintained. Also, $\Delta t_{\mathrm{acc}}, \Delta t_{\mathrm{dec}} \in \mathbb{R}_{\geq 0}$ are respectively acceleration and deceleration durations necessary to develop $\Delta v_{\mathrm{adv}}$ change in speed. With a constant rate of change of speed assumption:
\begin{equation}
    \Delta t_{\mathrm{acc}} = \frac{\lvert \Delta v_{\mathrm{adv}}\rvert}{a^+}, \quad \Delta t_{\mathrm{dec}} = \frac{\lvert\Delta v_{\mathrm{adv}}\rvert}{a^-},
\end{equation}
where acceleration $a^+ \in \mathbb{R}^+$ and deceleration $a^-\in \mathbb{R}^+$ are agent-specific admissible values. In other words, $\Delta v(t)$ is a function that generates a ramp speed change profile that saturates at $\Delta v_{\mathrm{adv}}$. Therefore, the advised ground speed trajectory is expressed with
\begin{equation}
    v_{g}^{\mathrm{adv}}(t) = v_{g}(t) + \Delta v(t).
    \label{eq:speed_adv}
\end{equation}
where $v_g(t):[0,T] \rightarrow \mathbb{R}_{\geq0}$ is the nominal ground speed.
Eq. \eqref{eq:speed_adv} is constrained to assure feasible airspeed for wing-lift aircraft to hinder stalling and overspeeding. The corresponding airspeed trajectory $v_{a}^{\mathrm{adv}}:(0,T] \rightarrow \mathbb{R}^+$ can be found with a known horizontal wind field
\begin{equation}
    v_{a}^{\mathrm{adv}}(t) = \left \lVert v_{g}^{\mathrm{adv}}(t)
    \begin{bmatrix}
        \sin\chi_i(t)\\
        \cos\chi_i(t)\\
        0
    \end{bmatrix}
    - v_w
    \begin{bmatrix}
        \sin\chi_w\\
        \cos\chi_w\\
        0
    \end{bmatrix}
    \right \rVert,
\end{equation}
to ensure $v_{a}^{\mathrm{adv}}(t) \in \mathbb{V}, \forall t$ is in safe airspeed envelope $\mathbb{V}$. A conservative $\mathbb{V}$ can be set to accommodate unsteady wind such as gust and turbulence. On the other hand,  $\Delta v(t) = - v_g(t), \exists t \in [0,T]$ can be advised to hover VTOL intruders. The decision vector $\mathbf{x} \in \mathbb{R}^3$ for the speed advisory is
\begin{equation}
    \mathbf{x} = [\tilde{t}_{\mathrm{on}} \; \Delta \tilde{v}_{\mathrm{adv}} \; \tilde{t}_{\text{hold}}]^{\intercal} \in \mathbb{R}^3,
\end{equation}
where each decision variable is normalized:
\begin{equation}
    \tilde{t}_{\mathrm{on}} = \frac{t_{\mathrm{on}}}{t_c}, \quad \Delta \tilde{v}_{\mathrm{adv}} = \frac{\lvert\Delta v_{\mathrm{adv}}\rvert}{\Delta v_{\mathrm{adv}}^{\max}}, \quad \tilde{t}_{\text{hold}} = \frac{t_{\text{hold}}}{T}.
\end{equation}
Here, $t_c > t_{\mathrm{on}} \in (0,T]$ denotes the first conflict instance, and the variable with the superscript "max" denotes agent- and case-dependent positive scalar upper bound. Therefore, the optimal solution of Eq.~\eqref{eq:adv_opt} resolves the predicted conflict while minimizing the advisory initiation time through $\tilde{t}_{\mathrm{on}}$ and minimizing deviations from the nominal trajectory through $\Delta\tilde v_{\mathrm{adv}}$ and $\tilde t_{\mathrm{hold}}$, thereby preserving mission fidelity.

\subsection{Altitude}
The altitude advisory $\mathcal{R}_{\mathrm{Alt}}$ resolves a predicted conflict by modifying the intruder's altitude profile while preserving its nominal lateral path and ground speed trajectory. This advisory applies to airborne intruders and is formulated as an optimization problem under the generic advisory structure in Eq.~\eqref{eq:adv_opt}. The purpose of the altitude advisory is to introduce the smallest vertical deviation
necessary to restore well-clear separation from the distressed ego trajectory without increasing the interaction risk with the rest of the agents.

Let $\Delta h_{\mathrm{adv}}\in\mathbb{R}$ denote the advised altitude offset. The corresponding altitude offset trajectory $\Delta h:(0,T]\rightarrow\mathbb{R}$ is generated using a ramp--hold--return
profile:
\begin{equation}
\begin{aligned}
    \Delta h(t) =
    \Delta h_{\mathrm{adv}}
    \bigg (
    &\operatorname{sat}_{[0,1]}
    \left(
    \frac{t-t_{\mathrm{on}}}{\Delta t_{\mathrm{climb}}}
    \right)- \\
    &\operatorname{sat}_{[0,1]}
    \left(
    \frac{t-t_{\mathrm{on}}-\Delta t_{\mathrm{climb}}-t_{\mathrm{hold}}}
    {\Delta t_{\mathrm{desc}}}
    \right)
    \bigg ).
\end{aligned}
\end{equation}
The quantities $\Delta t_{\mathrm{climb}}$ and $\Delta t_{\mathrm{desc}}$ denote the time required to develop and remove the advised altitude offset under constant ground speed at $t= t_{\mathrm{on}}$. They are given by
\begin{equation}
    \Delta t_{\mathrm{climb}} =
    \frac{\lvert \Delta h_{\mathrm{adv}}\rvert}{v_g(t_{\mathrm{on}})\sin\gamma_1},
    \quad
    \Delta t_{\mathrm{desc}} =
    \frac{\lvert \Delta h_{\mathrm{adv}}\rvert}{v_g(t_{\mathrm{on}})\sin\gamma_2},
\end{equation}
where $\gamma_1 \in [\underline{\gamma}_1, \overline{\gamma}_1]$ and 
$\gamma_2 \in [\underline{\gamma}_2, \overline{\gamma}_2] $ 
denote the flight-path angles used to deviate from and recover to the nominal trajectory, respectively. The admissible sets are agent-specific; for instance, VTOL agents may be able to climb or descend at steeper angles than wing-lift aircraft. Overall, the advised altitude trajectory is
\begin{equation}
    h_{\mathrm{adv}}(t) = h(t) + \Delta h(t),
\end{equation}
while $\varphi, \lambda, \chi$, and $v_g$ are inherited from the nominal trajectory. The altitude modification can optionally be constrained above a pre-defined floor altitude to ensure vertical obstacle clearance. The altitude advisory decision vector is
\begin{equation}
    \mathbf{x} = [\tilde{t}_{\mathrm{on}} \; \Delta \tilde{h}_{\mathrm{adv}} \; \tilde{\gamma}_1 \; \tilde{\gamma}_2 \; \tilde{t}_{\mathrm{hold}}]^{\intercal} \in \mathbb{R}^5.
\end{equation}
The normalized variables are
\begin{equation}
    \Delta \tilde{h}_{\mathrm{adv}} = \frac{\lvert\Delta h_{\mathrm{adv}}\rvert}{\Delta h_{\mathrm{adv}}^{\max}},
    \quad
    \tilde{\gamma}_{\cdot} = \frac{\lvert \gamma_{\cdot}\rvert}{\overline{\gamma}_{\cdot} - \underline{\gamma}_{\cdot}}
    .
\end{equation}
Thus, the optimal solution introduces the smallest altitude change through shallow flight path angles, with the earliest necessary advisory initiation and shortest required holding duration, such that $\psi_1(\tau,\tau_{\mathrm{adv}}^{\prime}) \leq 0$ and
$\psi_2(\tau^{\prime},\tau_{\mathrm{adv}}^{\prime}) \leq 0$.

\subsection{Extend}
The extend advisory $\mathcal{R}_{\mathrm{Extend}}$ resolves a predicted
conflict by introducing a geometric extension to the intruder trajectory. The initial and final advisory states are selected on the nominal trajectory, and a new connecting path is generated between them. Therefore, the advised path temporarily deviates from the nominal path and subsequently recovers to it.

Again, $t_{\mathrm{on}}$ denotes the advisory onset time, corresponding to the initial advisory state and $t_{\mathrm{rec}} \in (0,T]$ denotes the recovery time, corresponding to the state on the nominal trajectory at which the advised trajectory rejoins the nominal path,
\begin{equation}
    s_{\mathrm{on}}^{\prime} = \tau^{\prime}(t_{\mathrm{on}}), \quad s_{\mathrm{rec}}^{\prime} = \tau^{\prime}(t_{\mathrm{rec}}).
\end{equation}
The interval of $t_{\mathrm{rec}}$ implies that the final advisory state may lie backward in time compare to initial advisory state. An extension trajectory $\tau_e^{\prime} \in \mathbb{T}$ is computed such that
\begin{equation}
    \tau_e^{\prime}(0) = s_{\mathrm{on}}^{\prime},
    \quad
    \tau_e^{\prime}(t_{\mathrm{rec}}-t_{\mathrm{on}}) = s_{\mathrm{rec}}^{\prime}.
\end{equation}
In this study, an S-turn Dubins path solver~\cite{atkins2006emergency} is chosen to compute $\tau_e^{\prime}$ for its computational efficiency. An example S-turn extension is shown by the green curve in Fig.~\ref{fig:sturn}, while the black curve between the same states represents the nominal trajectory.
\begin{figure}[t!]
    \centering
    \includegraphics[width=\linewidth]{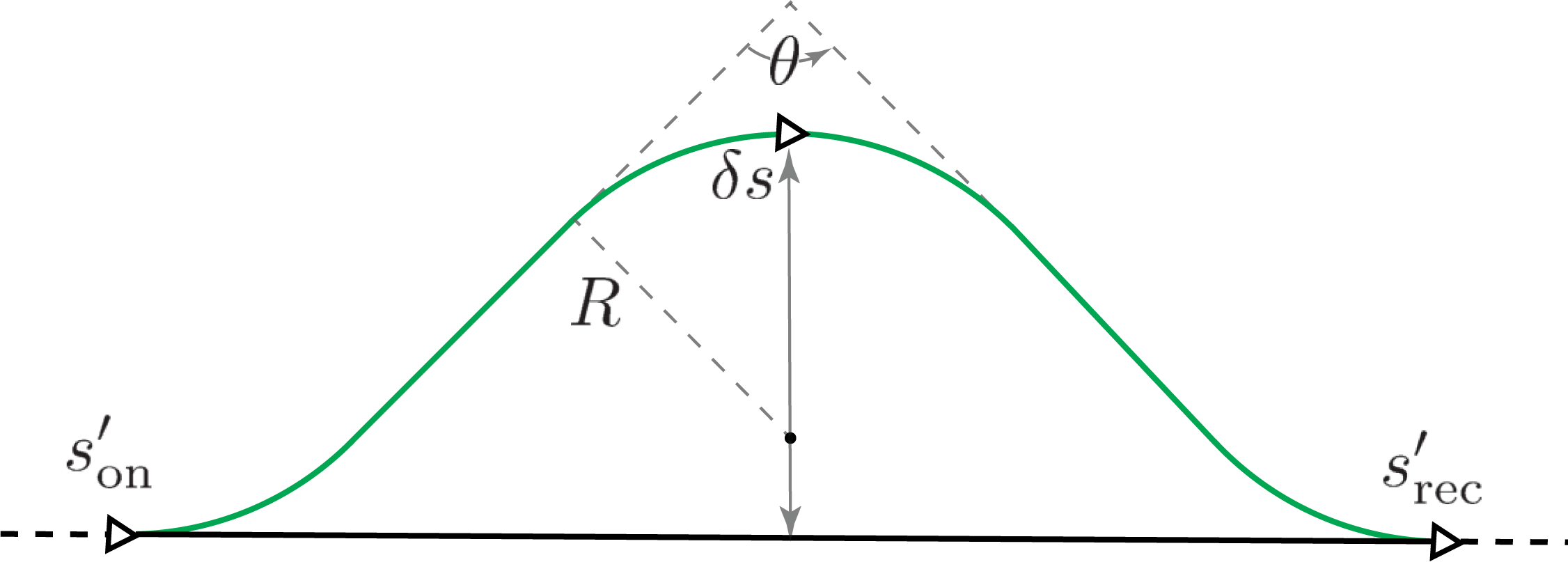}
    \caption{An example S-turn Dubins path geometry on the horizontal plane.}
    \label{fig:sturn}
\end{figure}
An S-turn path is obtained by concatenating two turn-straight-turn Dubins paths \cite{dubins1957curves} with a constant turn radius $R = v_g(t_{\mathrm{on}})/ \dot{\chi}$ with a constant positive turn rate $\dot\chi \in \mathbb{R}^+$.
Let $\mathcal{D}:\mathcal{S}\times\mathcal{S}\rightarrow\mathbb{T}$ denote a Dubins path solver. The S-turn extension is defined as
$
    \tau_e^{\prime} = \mathcal{D}(s_{\mathrm{on}}^{\prime},\delta s) \cup \mathcal{D}(\delta s,s_{\mathrm{rec}}^{\prime}),
$
where $\delta s\in\mathcal{S}$ is an intermediate state used to shape the
extension. It is placed relative to the line segment connecting $s_{\mathrm{on}}^{\prime}$ and $s_{\mathrm{rec}}^{\prime}$. Its lateral displacement from this line controls the extent, while the angle $\theta$ controls the displacement of $\delta s$. When $\theta$ approaches to zero, the straight segments of the two Dubins paths become approximately parallel and extend away from the nominal trajectory, producing a racetrack-like geometry. When $\theta=\pi$, the straight segments tend to overlap with the nominal direction, reducing the lateral extent of the maneuver. However, this limiting case does not necessarily recover the nominal trajectory exactly, because the Dubins solver may still generate orbiting segments with nearly complete circular arcs.

The control angle $\theta$ is parameterized by a centered normalized and continuous decision variable $z_\theta\in[-1,1]$ for optimization and restricted to the disjoint admissible set $\theta \in [\underline{\theta}_{l},\overline{\theta}_{l}] \cup (\underline{\theta}_{r},\overline{\theta}_{r}]$. The value $z_\theta=0$ corresponds to $\theta=\pm 180^\circ$, while the magnitude $|z_\theta|$ moves the advisory toward the admissible bounds on either side. The mapping is
\begin{equation}
\theta(z_\theta)
=
\begin{cases}
\underline{\theta}_{l}
+
|z_\theta|
\left(
\overline{\theta}_{l}-\underline{\theta}_{l}
\right),
& -1 \leq z_\theta < 0,
\\[6pt]
\overline{\theta}_{r}
-
z_\theta
\left(
\overline{\theta}_{r}-\underline{\theta}_{r}
\right),
& 0 \leq z_\theta \leq 1.
\end{cases}
\end{equation}
The first and second branches map $z_\theta$ to the left and right extension directions, respectively.

For altitude matching at the initial and final states of the extension, the altitude profile of $\tau_e^{\prime}$ is projected onto the vertical plane with a constant flight path angle $\gamma$ such that
\begin{equation}
    \gamma = \tan^{-1}\frac{h_{\mathrm{rec}}^{\prime} - h_{\mathrm{on}}^{\prime}}{L} \in [\underline{\gamma}, \overline{\gamma}],
\end{equation}
where $L$ is the extended traversal, and $[\underline{\gamma}, \overline{\gamma}]$ is the admissible range, enforced along with the constraints in Eq. \eqref{eq:adv_opt}. The decision vector for this advisory is
\begin{equation}
    \mathbf{x} = 
    \begin{bmatrix}
        \left (\lvert t_{\mathrm{on}} - t_{\mathrm{rec}}\rvert \right) / {T}\;
        z_{\theta}
    \end{bmatrix}^{\intercal} \in \mathbb{R}^2
\end{equation}
The first element of $\mathbf{x}$ represents the normalized temporal separation between the initial and final advisory states selected on the nominal trajectory. Thus, minimizing $\lvert t_{\mathrm{on}}-t_{\mathrm{rec}}\rvert/T$ encourages the extension to be localized around a short segment of the nominal trajectory. In the limiting case $t_{\mathrm{on}}=t_{\mathrm{rec}}$, the extension starts and ends at the same nominal state, resulting in a localized maneuver rather than a replacement of an extended portion of the nominal path. This formulation is intended to minimize the spatial footprint of the extension thus limit unnecessary interference with surrounding nominal multi-agent activity. It also preserves mission fidelity for operations such as surveillance, inspection, or search-and-rescue, where skipping a portion of the nominal trajectory may lead to loss of coverage or missed task objectives.

\subsection{Divert}
The advisory $\mathcal{R}_{\mathrm{Divert}}$ is the most intrusive
resolution. It generates a new path from the intruder's nominal trajectory to a predefined holding point, thereby diverting the intruder from its original mission.  Let $\mathbb{H}$ be a finite set of holding points
\begin{equation}
    \mathbb{H} = \{q_i \mid q = (\varphi,\lambda,\underline{h},\overline{h},\chi), i \in \mathbb{Z}^+\}.
\end{equation}
Each holding point specifies a horizontal position and inbound course angle, but does not prescribe a fixed altitude. Instead, each holding point is associated with an admissible altitude interval with floor $\underline{h}$ and ceiling $\overline{h}$ altitudes. This freedom gives the optimizer additional flexibility to satisfy well-clear constraints along the divert trajectory. It also allows multiple intruders to be assigned to the same holding point at different altitudes, provided that the resulting trajectories do not increase the interaction risk with the remaining nominal agents.

For a holding point $q \in \mathbb{H}$, the terminal divert state is $s_{h}^{\prime}$ obtained by augmenting $q$ with
a flight path angle such that
\begin{equation}
    h_h^{\prime} = h_{\mathrm{on}}^{\prime} + L\tan\gamma, \quad s_h^{\prime} = (\varphi_h^{\prime}, \lambda_h^{\prime}, h_h^{\prime}, \chi_h^{\prime}),
\end{equation}
and $L$ is the diversion traversal to the holding point. A Dubins-based path is generated from $s_{\mathrm{on}}^{\prime}$ to $s_{h}^{\prime}$ using the Dubins path solver $\mathcal{D}$ such that the advised diversion trajectory is $\tau_{\mathrm{adv}^{\prime}} = \tau^{\prime}(t; t\in [0,t_{\mathrm{on}}]) \cup \mathcal{D}(s_{\mathrm{on}}^{\prime}, s^{\prime})$. The advised trajectory is therefore obtained by replacing the nominal trajectory after $t_{\mathrm{on}}$ with the divert path to the selected holding point. The decision vector is
\begin{equation}
    \mathbf{x} =
    \begin{bmatrix}
        \tilde t_{\mathrm{on}} \; \tilde{\gamma}
    \end{bmatrix}^{\intercal} \in \mathbb{R}^2, \quad \tilde{\gamma} = \max\left(0,
\frac{-\gamma_{\mathrm{thr}}-\gamma}
{-\gamma_{\mathrm{thr}}-\underline{\gamma}},
\frac{\gamma-\gamma_{\mathrm{thr}}}
{\overline{\gamma}-\gamma_{\mathrm{thr}}}
\right).
\end{equation}
The feasible flight path angle range is enforced as in $\mathcal{R}_{\mathrm{Extend}}$, while the preferred range is defined by $\lvert \gamma \rvert \leq \gamma_{\mathrm{thr}} \in \mathbb{R}^+$. This study uses $\gamma_{\mathrm{thr}} = 3^{\circ}$.

Candidate holding points are evaluated according to proximity to $s_{\mathrm{on}}^{\prime}$. Thus, nearby holding points are prioritized to reduce the spatial and temporal extent of the diversion. If no feasible trajectory is found for the closest holding point, the next candidate holding point is evaluated. This process continues until a feasible divert trajectory is found or all candidate holding points have been exhausted. In this way, $\mathcal{R}_{\mathrm{Divert}}$ provides a terminal resolution action when less intrusive advisories are insufficient, while still preserving compatibility with multi-agent traffic through interaction-risk constraint.

%% file: Sections/5_Algorithm.tex
\section{Implementation of the Conflict Resolution Advisory}
\label{sec:alg}
To support real-time deployment, this section presents the proposed algorithmic implementation of the conflict resolution advisory set $\mathcal{R}$, summarized in Algorithm~\ref{alg:parallel_resolution}.
\begin{algorithm}[ht!]
\caption{Conflict Resolution Advisory}
\label{alg:parallel_resolution}
\begin{algorithmic}[1]
\Require Ego trajectory $\tau$, conflict set $\mathbb{T}_{c}'$, intruder trajectories $\mathbb{T}'$, runtime limit $t_{\mathrm{limit}}$
\Ensure Advisory-modified intruder trajectories $\mathbb{T}'$

\ForAll{$\tau' \in \mathbb{T}'_c$ }

    \If{$\tau'(0)$ is on the ground}
        \State $\tau'_{\mathrm{adv}} \gets \mathcal{R}_{\mathrm{Halt}}(\tau',T)$
    \Else
        \State $\mathcal{F} \gets \langle \rangle$ \Comment{Initialize rank-ordered solution set}
        \ForAll{$\mathcal{R}_k \in \mathcal{R}_{\mathrm{Ordered}}$ \textbf{ in parallel}}
        \State $i \gets \operatorname{rank}(\mathcal{R}_k)$
        \State $\mathcal{F}_i \gets \varnothing$
            \State $\tau'_k\gets\textsc{Optimize}(\mathcal{R}_k,\tau',t_{\mathrm{limit}})$
                \If {$\tau_k^{\prime}$ is feasible}
                    \State $\mathcal{F}_i \gets \tau_k^{\prime}$
                \EndIf
        \EndFor

        \If{$\exists i \text{ such that }\mathcal{F}_i\neq \varnothing$}
            \State $i^\star \gets \min\left\{i \mid \mathcal{F}_i \neq \varnothing \right\}$
            \State $\tau_{\mathrm{adv}}' \gets \mathcal{F}_{i^\star}$
        \Else
            \State Mark $\tau'$ as unresolved
            \State \textbf{continue}
        \EndIf
    \EndIf
\State Replace $\tau'$ with $\tau'_{\mathrm{adv}}$ in $\mathbb{T}'$
\EndFor
\State \Return $\mathbb{T}'$
\end{algorithmic}
\end{algorithm}
$\mathcal{R}_{\mathrm{Halt}}$ is always issued to safely delay takeoff when intruder aircraft are still on the ground.  Optimization-based advisories for airborne intruders are evaluated in order of increasing intrusiveness. The rank-ordered advisory set is defined as
\begin{equation}
    \mathcal{R}_{\mathrm{Ordered}} =
    \left\langle
    \mathcal{R}_{\mathrm{Speed}},
    \mathcal{R}_{\mathrm{Alt}},
    \mathcal{R}_{\mathrm{Extend}},
    \mathcal{R}_{\mathrm{Divert}}
    \right\rangle.
\end{equation}
The operator $\operatorname{rank}$ returns the position of an advisory type within $\mathcal{R}_{\mathrm{Ordered}}$, with lower values corresponding to less intrusive advisories, such that $\operatorname{rank}(\mathcal{R}_{\mathrm{Speed}}) = 1$ and $\operatorname{rank}(\mathcal{R}_{\mathrm{Divert}}) = 4$.
Intrusiveness of advisories is assessed based on the changes to the nominal trajectory geometry. For instance, $\mathcal{R}_{\mathrm{Speed}}$ only alters the ground speed profile without affecting the path geometry; therefore, it is deemed the least intrusive method. While $\mathcal{R}_{\mathrm{Alt}}$ only modifies the vertical profile, $\mathcal{R}_{\mathrm{Extend}}$ partly introduces a new local segment with changes in both horizontal and vertical planes; thus, $\mathcal{R}_{\mathrm{Alt}}$ is less intrusive than $\mathcal{R}_{\mathrm{Extend}}$. The most intrusive advisory is $\mathcal{R}_{\mathrm{Divert}}$ as it assigns a completely different path to the intruder. This hierarchy is important. Multiple feasible advisories may be found within a runtime budget $t_{\mathrm{limit}}$; therefore, candidates are selected according to $\mathcal{R}_{\mathrm{Ordered}}$ so that the least intrusive feasible modification is returned. By default, all advisory solvers are executed until the common time limit $t_{\mathrm{limit}}$. The best feasible candidates found within the allowed runtime are stored in a rank-ordered solution set $\mathcal{F}$ as in line~11 in Algorithm~\ref{alg:parallel_resolution}, and the top-ranked feasible advisory candidate in $\mathcal{F}$, corresponding to the least intrusive advisory, is selected per lines~15-16. For time-critical cases such as in-flight emergencies, an optional latency-prioritized early acceptance rule may be used. For instance, if an advisory reaches optimality before $t_{\mathrm{limit}}$, the remaining solvers can be terminated to conserve computational resources, and that advisory is accepted immediately. If no feasible advisory is found within the available advisory set and computation time, the conflict is marked as unresolved. It must then be addressed through a higher-level action, such as replanning the priority trajectory or selecting an alternative candidate. It is worth mentioning that each selected advisory modifies the predicted multi-agent state space over time and therefore affects subsequent conflict resolutions, as reflected in line~22.

In practice, the trajectories are discrete, sampled at a finite frequency, and the advisory transformations include non-differentiable operators such as saturation. To address, a derivative-free global optimizer with multi-start is employed. Also, the completeness of Algorithm~\ref{alg:parallel_resolution} is advisory-relative rather than global. Specifically, the algorithm is complete with respect to the predefined advisory set $\mathcal{R}_{\mathrm{Ordered}}$ if the corresponding optimization routine can recover a feasible solution. For a conflicting intruder trajectory $\tau'\in \mathbb{T}_c'$, if $\exists \mathbf{x} \in \mathbb{X}$ such that $\psi_1(\tau,\mathcal{R}_{\cdot}(\tau',\mathbf{x}))\leq 0$ and $\psi_2(\tau',\mathcal{R}_{\cdot}(\tau',\mathbf{x}))\leq 0$, then the optimizer finds it. However, restricted computational resources prevent completeness guarantees under fixed runtime limits. 

%% file: Sections/6_Application.tex
\section{Applications}
\label{sec:application}
The following sections evaluate the proposed framework as a coupled real-time contingency landing planning and deconfliction architecture. First, synthetic validation cases are presented to demonstrate the planner response to static and moving obstacle encounters. The planner is then benchmarked against Dubins-based three-dimensional trajectories, where all candidate trajectories are generated and the trajectory with the minimum cumulative LoWC risk, defined in \eqref{eq:cumul_lowc_risk}, is selected for comparison. For benchmarking, noisy real-world ADS-B-based trajectory data from the Washington, D.C., airspace are used as dynamic-agent motion traces. Although the data originate from conventional air traffic surveillance, they provide a practical proxy for aerial robotics. The trajectories are openly available \cite{opensky}, noisy, time-varying, and far more cost- and time-efficient to obtain than comparable multi-agent flight experiments. They also capture dynamic agent motion in low-altitude urban airspace, which is a relevant operating environment for AAM and small UAS. 

The ego platform is modeled as an AAM-sized wing-lift aircraft. The dynamic agents include AAM-sized wing-lift and VTOL aircraft comparable in size, as well as commercial transport aircraft which is referred as airliner. All trajectories are sampled at a constant rate of 1~Hz. The path planning and deconfliction modules are implemented in C/C++ and executed on a personal computer with a 3.49~GHz Apple M2 chip.

Vehicle performance parameters used in this study are summarized in Table \ref{tab:perf_params}. The ego ground speed interval is taken from previous work that reports the realized airspeed of the same wing-lift aircraft model under unsteady wind \cite{tekaslan2026airspeed}. Position error bounds are adopted from Federal Aviation Administration avionics system certification requirements \cite{FAA_AC_20_138D_2016, FAA_AC_20_165B_2015}. The horizontal and vertical error bounds are modeled as piecewise functions above and below 2,500 ft and 5,000 ft, respectively. For example, the vertical position error bound is set to 210 ft above 5,000 ft and 160 ft below 5,000 ft.
\begin{table}[t!]
    \centering
    \caption{Vehicle performance parameters.}
    \renewcommand{\arraystretch}{1.3}
    \begin{tabular}{llc}
    \hline \hline
        \textbf{Type} & \textbf{Parameter} & \textbf{Value}\\
        \hline
        \multirow{3}{*}{Ego} & $v_e^-$, $v_e^0$ $v_e^+$ [kts] & 85, 90, 95\\
        & $\bar{\mathbf{e}}_H$ [ft] & 303.8\\
        & $\bar{e}_V$ [ft] & \{160, 210\}\\
        \hline
        \multirow{4}{*}{All intruders} & $\bar{\mathbf{e}}_H$ [NM] & \{0.3, 1\}\\
        & $\bar{e}_V$ [ft] & \{160, 210\}\\
        & $a^+, a^-$ [kts/s] & 1\\ 
        & $\Delta h_{\mathrm{adv}}^{\max}$ [ft] & 1000\\
        \hline
        \multirow{2}{*}{VTOL} & $\Delta v_{\mathrm{adv}}$ [kts] & [-150 150]\\
        & $[\underline{\gamma}_1, \overline{\gamma}_1]$, $[\underline{\gamma}_2, \overline{\gamma}_2]$ [$^{\circ}$] & [-15 15]\\ 
        \hline
        \multirow{3}{*}{AAM wing-lift} 
        & $\Delta v_{\mathrm{adv}}$ [kts] & [-30 30]\\
        & $\mathbb{V}$ [kts] & [70 150]\\
        & $[\underline{\gamma}_1, \overline{\gamma}_1]$, $[\underline{\gamma}_2, \overline{\gamma}_2]$ [$^{\circ}$] & [-5 3]\\ 
        \hline
        \multirow{3}{*}{Airliner} 
        & $\Delta v_{\mathrm{adv}}$ [kts] & [-30 30]\\
        & $\mathbb{V}$ [kts] & [150 350]\\
        & $[\underline{\gamma}_1, \overline{\gamma}_1]$, $[\underline{\gamma}_2, \overline{\gamma}_2]$ [$^{\circ}$] & [-5 3]\\ 
        \hline
    \end{tabular}
    \label{tab:perf_params}
\end{table}

\subsection{Dynamic Multi-agent Simulations}
Dynamic simulations are used to evaluate the end-to-end behavior of the proposed contingency planning and deconfliction framework beyond kinematic trajectory-level analysis. The ego vehicle is modeled as an engine-out Cessna 182, as in \cite{tekaslan_airspace}, representing a wing-lift AAM-sized aircraft undergoing a loss-of-thrust and executing the generated emergency landing plan. Nearby traffic is modeled using dynamic aircraft models, including a full-size commercial airliner based on the NASA Generic Transport Model (GTM) \cite{Jordan2008, Tekles2017}, to evaluate how intruders respond to issued resolution advisories. All aircraft models are represented with six-degree-of-freedom dynamics.

The resulting simulation environment integrates contingency landing path planning, conflict prediction, advisory generation, and closed-loop aircraft response in a unified multi-agent setting. The dynamic simulations are only used to assess whether advisories produced by the kinematic PAA pipeline remain dynamically realizable when executed by representative aircraft models. This enables the advised and dynamically realized trajectories to be compared directly, providing a higher-fidelity assessment of both well-clear recovery and the dynamic consistency of the coordinated response.

\subsection{Conflict-Aware Contingency Landing Planning Validation}
Two synthetic conflict scenarios are designed to validate the discrete search-based conflict-aware contingency trajectory generation. The first scenario is demonstrated in Fig. \ref{fig:validation1}.
\begin{figure}[t!]
    \centering
    \includegraphics[width=\linewidth]{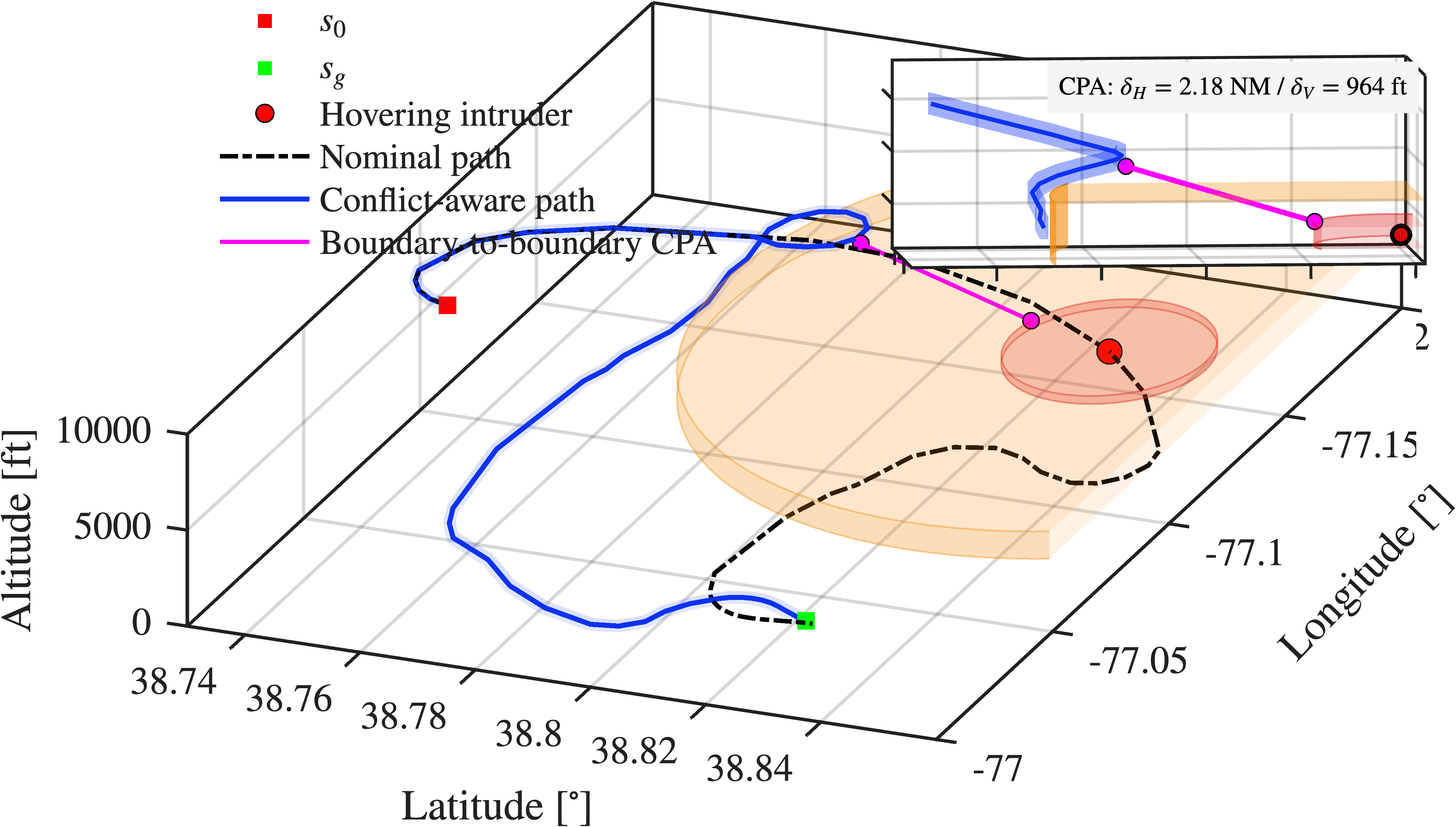}
    \caption{A validation case with a stationary VTOL intruder. The red cylinder represents the confidence-bounded position error region, while the orange cylinder represents the LoWC region.}
    \label{fig:validation1}
\end{figure}
The black dashed line represents the nominal contingency landing path of the ego aircraft in the absence of an intruder. A hovering VTOL intruder, shown by the red circle, is then arbitrarily placed such that it intersects the nominal trajectory. The confidence-bounded horizontal and vertical position errors of the intruder are geometrically represented by the red cylindrical volume surrounding the intruder. The encompassing orange cylinder indicates LoWC region. Subsequently, a conflict-aware contingency landing path is replanned, resulting in the trajectory shown by the solid blue line. This path is swept by a shaded blue corridor from start to end, whose size is determined by the confidence-bounded ego position error. In response to the intruder interception, the planner deviates from the nominal trajectory to maintain well-clear separation under position uncertainty. The inset in the upper-right corner provides a magnified view of the closest encounter between the position error boundaries, with the Closest Point of Approach (CPA) indicated by the magenta line.

The second scenario is intended to evaluate conflict avoidance in contingency landing planning when admissible and inadmissible heuristic cost functions generate optimal and suboptimal solutions, respectively. Specifically, the remaining distance to the goal, a commonly used admissible cost, is used as the only heuristic for the optimal case, whereas the suboptimal case uses the default inadmissible cost function, which includes optimal glide and direction of approach terms. Unlike the first scenario, the intruder moves North at constant altitude and speed, as presented in Fig.~\ref{fig:validation2}.
\begin{figure*}[t!]
    \centering
    \includegraphics[width=\linewidth]{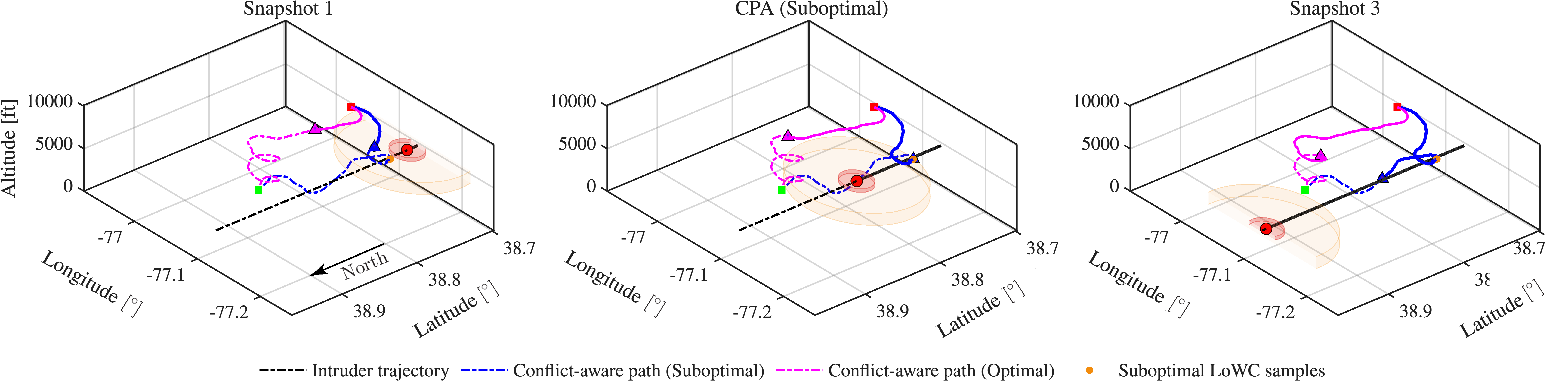}
    \caption{A validation case comparing optimal and suboptimal contingency landing solutions with an intruder in cruise flight. The red cylinder represents the confidence-bounded position error region, while the orange cylinder represents the LoWC region.}
    \label{fig:validation2}
\end{figure*}
From left to right, each plot shows the evolution of the flights forward in time. The middle plot corresponds to the snapshot at which the intruder and the ego executing the suboptimal path, shown by the blue line, reach the CPA. At the CPA, LoWC occurs for the suboptimal trajectory, while the optimal trajectory maintains well-clear separation throughout the flight. This behavior is expected. However, planning runtimes of 4 seconds and 1.1 seconds are registered for the optimal and suboptimal solutions, respectively. As the complexity increases with increasing number of agents, the optimal solution runtime is expected to increase while suboptimal planning remains low-latency. In contingency risk management, the availability of a suboptimal solution may be preferable to solution optimality with higher computational overhead.

\subsection{Conflict-Aware Contingency Landing Planning Benchmark}
The path planning benchmark is performed against geometric Dubins-based solutions. The planner used in this study was originally developed within a generic discrete search framework; however, when flight path angle and constant turn radius constraints are imposed through a finite action set to avoid a high-dimensional state-space, many search- and sampling-based alternatives effectively reduce to generating feasible geometric motion primitives similar in structure to the proposed approach. In addition, an optimal control based method was previously benchmarked against the underlying planner in \cite{heejin2026}, where it was shown to be more suitable for pre-flight planning and computationally prohibitive for real-time implementation. Therefore, the comparison in this work is focused on a representative geometric baseline rather than on another search- or sampling-based planner, or on an optimal control formulation.

To characterize the multi-agent density used in the benchmarking study, historical ADS-B data are processed for a two-week period. The number of simultaneously airborne aircraft is counted at each second and aggregated into 20 minute windows. Per Fig.~\ref{fig:traffic_dist}, the resulting occupancy distribution ranges from 1 to 21 simultaneous airborne aircraft. Most windows contain approximately 6--11 aircraft, with a median of 8 and a 90th percentile of 13. The right tail captures occasional high-density periods with more than 15 simultaneously aloft aircraft, providing a representative range of multi-agent activity for evaluating real-time contingency motion planning and deconfliction.
\begin{figure}
    \centering
    \includegraphics[width=\linewidth]{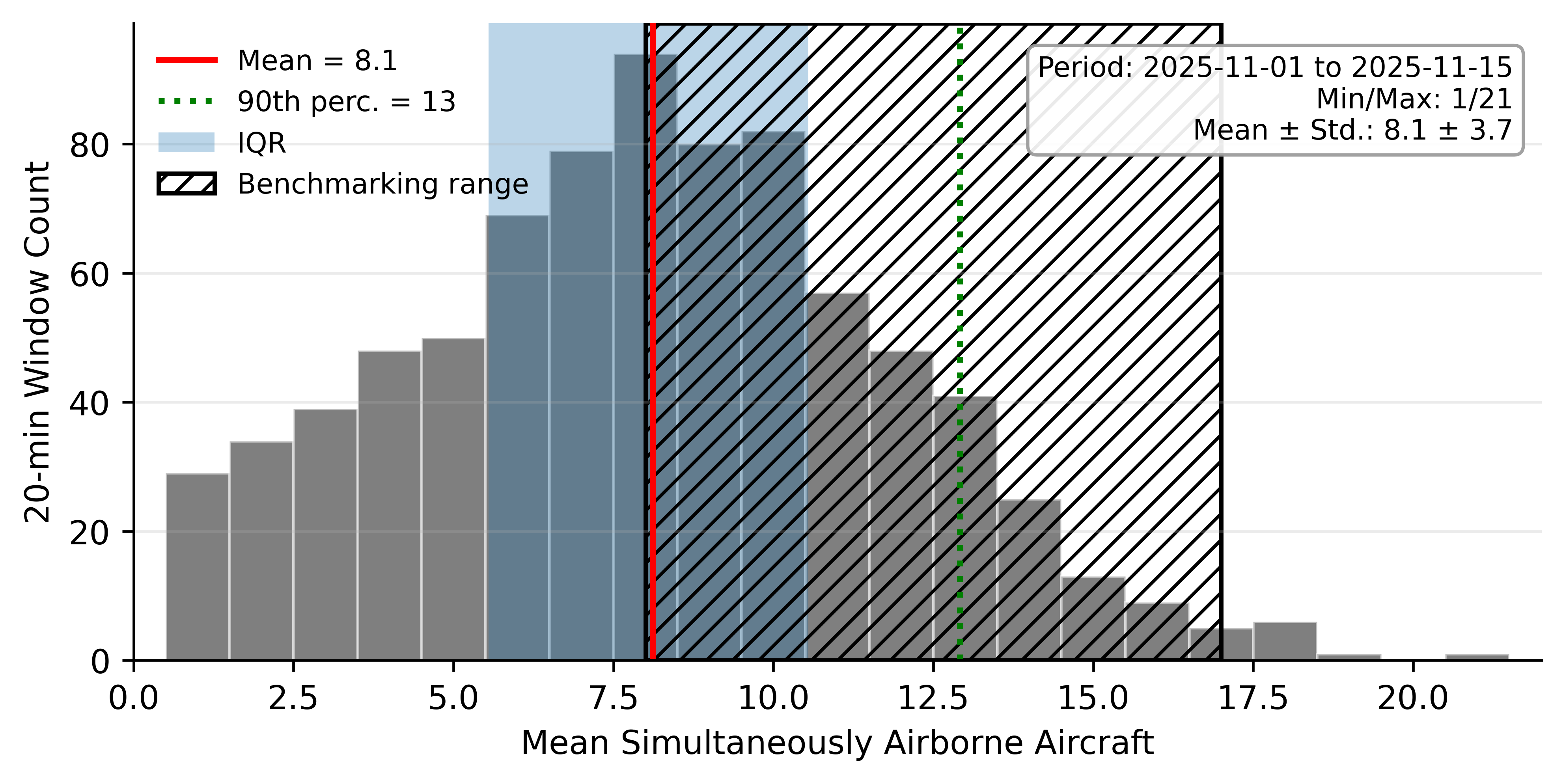}
    \caption{Distribution of simultaneous airborne agents aggregated over 20 minute windows. The shaded region indicates the interquartile range, while the hatched band marks the multi-agent density range selected for benchmarking.}
    \label{fig:traffic_dist}
\end{figure}
Aircraft count range of the 20-minute window chosen for benchmarking is also highlighted in Fig.~\ref{fig:traffic_dist}. For the selected benchmarking window, the first and third quartiles of simultaneous aloft aircraft occupancy are 11 and 14, respectively, with a peak count of 17.

We first present a representative use case. The scenario in Fig.~\ref{fig:use_case} evaluates conflict-aware contingency landing planning in dynamic terminal area traffic. The search-based solution is compared with the baseline Dubins solution under the same traffic realization. In both cases, LoWC is detected from the logged relative states between the ego aircraft and surrounding intruders. The black curve denotes the full ego contingency landing trajectory, while the yellow segments indicate portions of the ego trajectory where LoWC is recorded. The red and purple trajectories correspond to conflicting intruders, with shaded circular regions showing representative swept LoWC regions along the intruder paths. Other nearby traffic is shown using gray markers and one-minute trailing histories.
\begin{figure}[ht!]
    \centering
    \includegraphics[width=\linewidth]{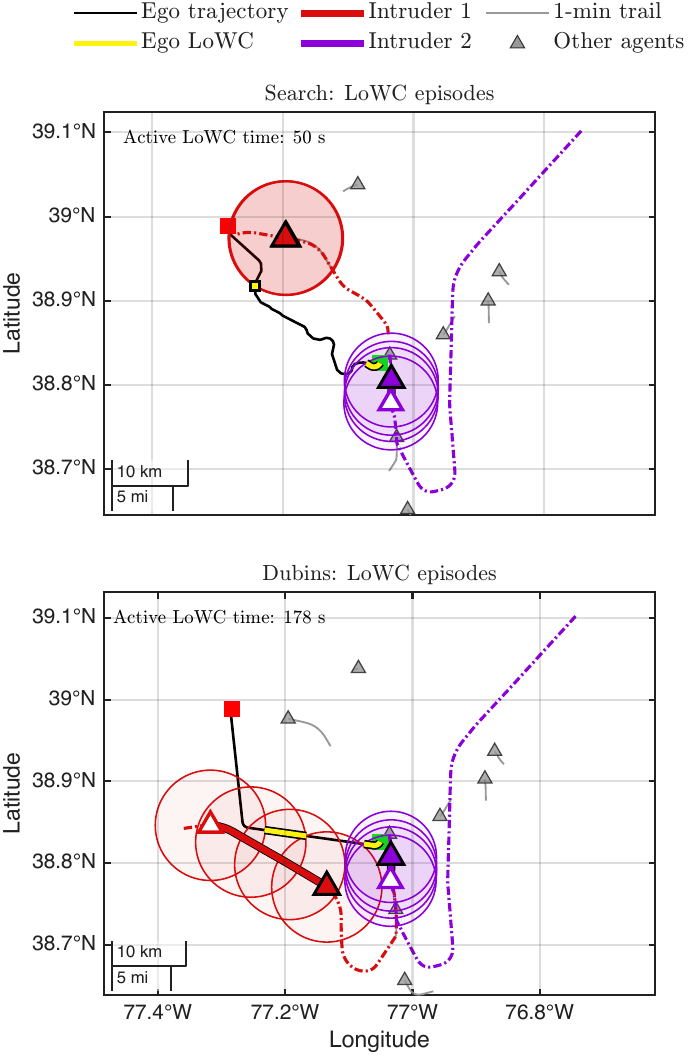}
    \caption{Comparison of the search-based contingency landing solution and the baseline Dubins solution under dynamic terminal-area traffic. The yellow segments indicate logged LoWC intervals. The shaded circles represent swept LoWC regions along intruder trajectories.}
    \label{fig:use_case}
\end{figure}
The search solution produces two LoWC episodes: a short two-second interaction with the first intruder and a later interaction near the terminal portion of the landing trajectory. The latter arises from the bottleneck structure of the final approach region. Near the landing site, all feasible trajectories must converge toward the touchdown point, causing otherwise distinct solutions to collapse into a narrow corridor. If an intruder occupies this corridor at the same time, the conflict becomes unavoidable. Therefore, these later LoWC episodes occur because of an over-constrained contingency landing problem in a multi-agent environment operating below separation minima. The search-based planner reduces active LoWC whenever spatial and temporal freedom exists; however, full separation recovery in the final approach bottleneck requires coordinated resolution advisories beyond path selection alone.

The benchmark study covers a wide range of conflict encounters in a real urban airspace. More than 900 contingency landing trajectories are computed for a wing-lift ego aircraft under complete loss-of-thrust, corresponding to more than 140 hours of gliding flight in a multi-agent environment. Initial contingency positions are randomized over a $28~\mathrm{NM} \times 28~\mathrm{NM}$ area at altitudes ranging from 3,000 ft to 10,000 ft. Figure~\ref{fig:cases} shows the benchmarked contingency and intruder trajectories. The convergence of contingency trajectories indicates the locations of the landing sites.
\begin{figure}[ht!]
    \centering
    \includegraphics[width=\linewidth]{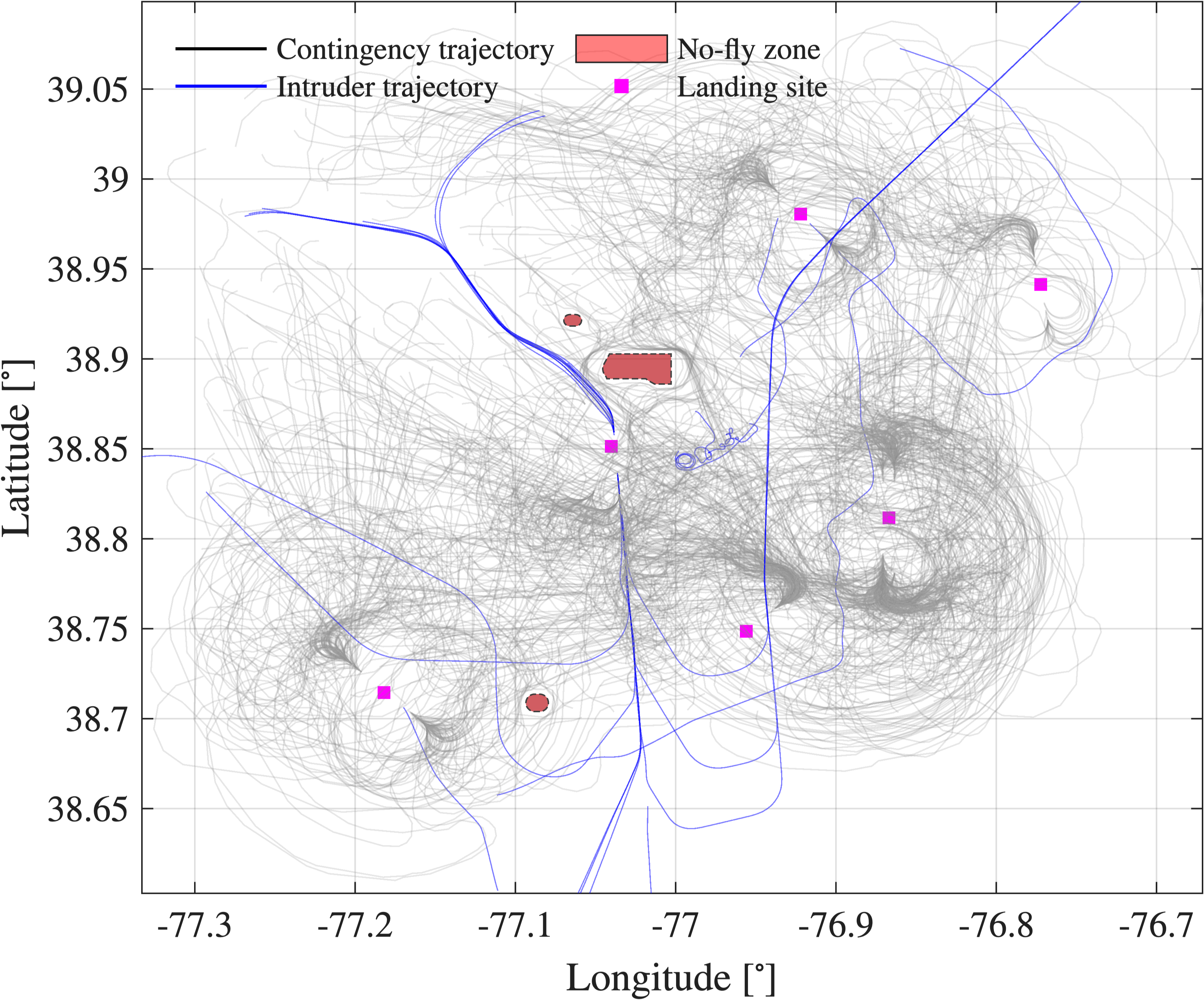}
    \caption{Spatial distribution of forced landing and dynamic agent trajectories.}
    \label{fig:cases}
\end{figure}
Fig. \ref{fig:planning_benchmark} respectively demonstrates cumulative distribution of the LoWC duration, conflict severity, and runtime statistics from top to bottom.
\begin{figure}[ht!]
    \centering
    \includegraphics[width=\linewidth]{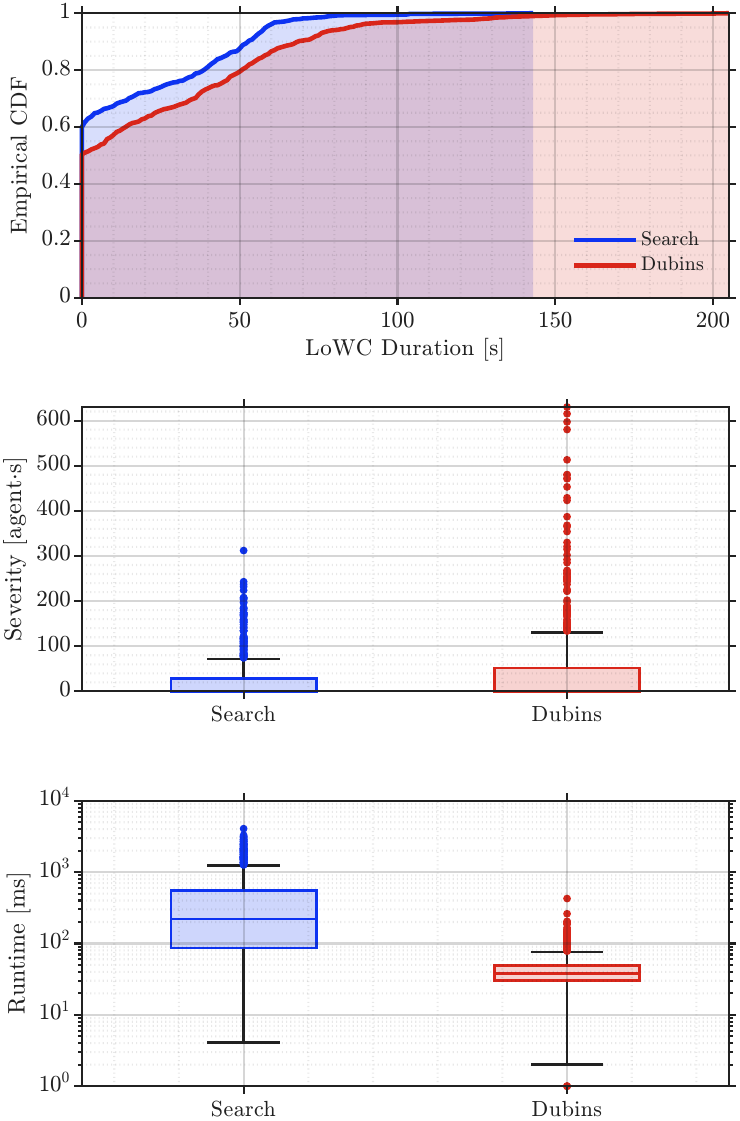}
    \caption{Conflict-aware contingency landing planning benchmark statistics.}
    \label{fig:planning_benchmark}
\end{figure}
Conflict severity is measured as the product of the time spent in LoWC and the number of conflicting intruders, providing an indicator of the intrusiveness of the required resolution advisories. The search-based planner achieves conflict-free outcomes in $60\%$ of the cases, compared with $50.4\%$ for the Dubins baseline. It reduces the mean LoWC duration from 23.3 s to 14.5 s and decreases the total LoWC exposure from 6 h to 3.7 h, corresponding to a $37.7\%$ reduction. The maximum number of conflicting intruders also decreases from 6 to 4. Overall, the search-based planner reduces the accumulated conflict severity by $45.5\%$, from 10.6 to 5.8 $\mathrm{agent}\cdot\mathrm{h}$. However, this improvement comes at the cost of increased computational burden: the search-based planner averages 455 ms with a worst-case runtime of 4 s, whereas the Dubins solver has mean and worst-case runtimes of 40 ms and 400 ms, respectively. The higher-end runtimes correspond to cases involving left and right S-turn extensions, which require evaluating eight candidate paths and integrating the cost along each solution.

The results show that the search-based planner statistically improves conflict avoidance during contingency landing in a multi-agent environment compared with the Dubins baseline, for which the candidate with the lowest cumulative LoWC risk in Eq.~\eqref{eq:cumul_lowc_risk} is selected. However, the search-based planner is not uniformly superior. In $19\%$ of all cases, the Dubins solution yields lower conflict severity. These cases reflect favorable encounter geometries produced by the Dubins candidate set, whereas the search-based planner prioritizes low overhead replanning over global optimality. Therefore, the trajectory with the lowest predicted conflict severity among the available candidates is selected for execution.

\subsection{Conflict Resolution Advisory Generation}
This section evaluates the proposed hierarchical conflict resolution advisory framework given in Algorithm \ref{alg:parallel_resolution}. For each individual case, the earliest advisory onset is lower bounded by the accumulated time required for path computation, datalink communication, and advisory generation, thereby simulating a serial PAA execution including communication for coordination. The distressed ego shares its landing intent after onboard planning by transmitting the selected trajectory through a datalink to the processor assigned for advisory generation. The resulting deconfliction is then communicated back through the datalink to the conflicting intruders. The total datalink delay is conservatively set to 1 s, corresponding to 500 ms in each direction since high-level command transmission latency for UAS application is reported less than 500 ms in \cite{9075984}, and less than 50 ms with 4G/5G technology in \cite{8935306}. Optimization runtime is limited to 1 second for conflict resolution.

First, the contingency landing use case presented in Fig. \ref{fig} is revisited to resolve the conflicts associated with the search-based solution. The resulting advisories are shown in Fig. \ref{fig}.
\begin{figure}[t!]
\centering
\includegraphics[width=\linewidth]{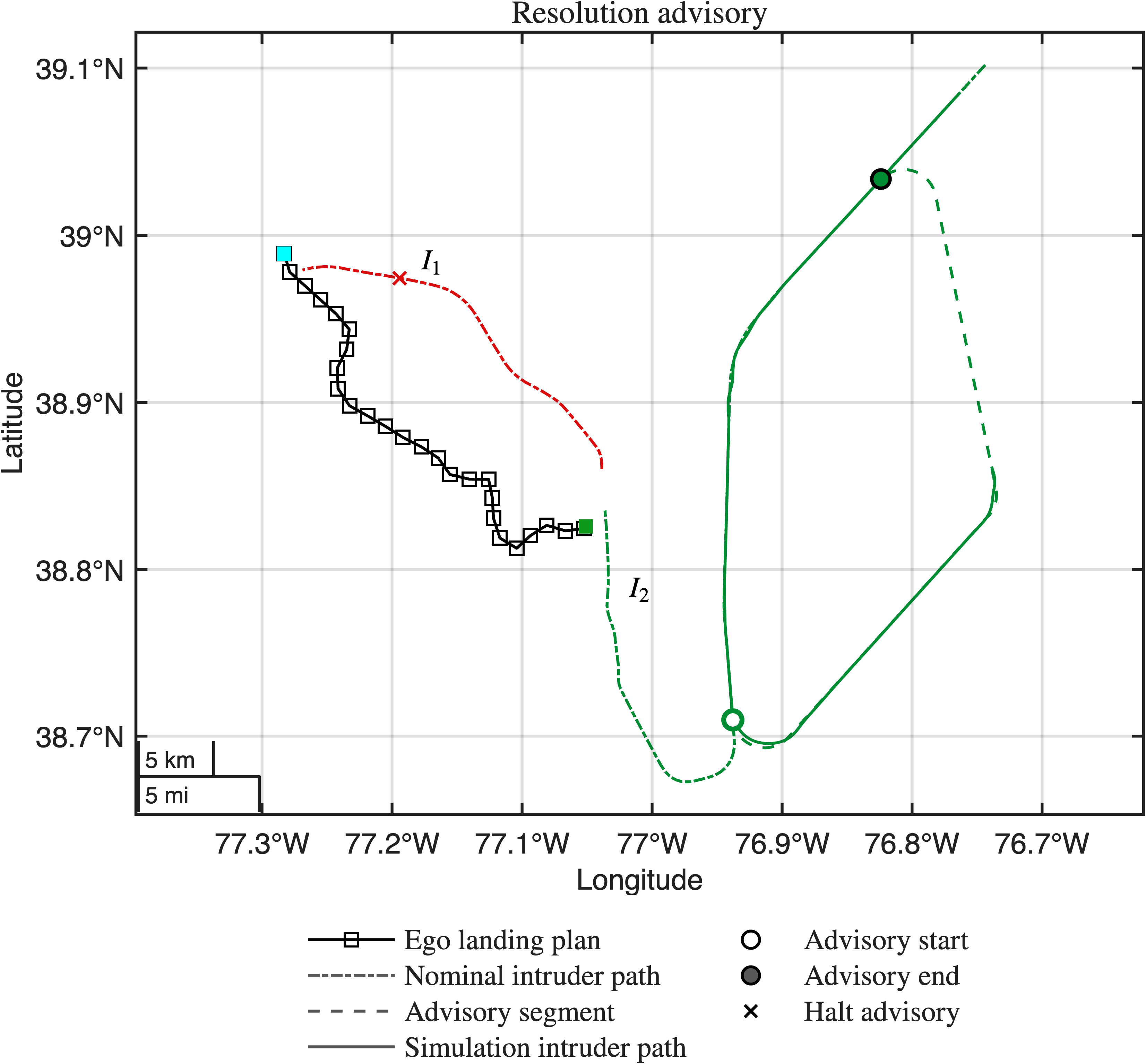}
\caption{Resolution advisories for the conflicts occurring in the contingency landing path planning use case.}
\label{fig}
\end{figure}
The cyan and green markers respectively indicate the initial and final states of the contingency plan. Each intruder is color-coded: the first intruder, denoted by $I_1$, is shown in red, whereas the second intruder, denoted by $I_2$, is shown in green. The dash-dotted lines represent the nominal intruder trajectories before deconfliction, the dashed green line represents the advised trajectory for $I_2$, and the solid green line denotes the corresponding dynamic simulation trajectory of $I_2$. Since $I_1$ is scheduled to take off only seconds after the contingency onset, it is advised to delay its departure. This delay postpones its flight beyond the landing time horizon; therefore, no associated advised trajectory is shown for $I_1$. In contrast, $I_2$ is approaching for landing at the time of the ego loss-of-thrust event. Given the predefined advisory hierarchy, the first feasible solution is found by extending the nominal trajectory through $\mathcal{R}_{\mathrm{Extend}}$. In other words, no feasible solution is found within the 1 s runtime limit for either $\mathcal{R}_{\mathrm{Speed}}$ or $\mathcal{R}_{\mathrm{Alt}}$. The resulting S-turn trajectory, shown by the green advised path, departs from the nominal path at the advisory start and later reconnects to it.

The separation margins associated with this scenario are shown in Fig. \ref{fig:460_margins}. Conflicts with $I_1$ and $I_2$ occur around 200 s and 700 s, respectively, where $\delta_H -  R_H \leq 0$ and $\delta_V -  R_V \leq 0$ simultaneously. Since no advised trajectory is computed for $I_1$, its post-advisory separation curves are not shown. For $I_2$, the green curves compare the nominal, advised, and simulated well-clear margins, with the advised and simulated trajectories deviating from the nominal case as a result of the path-extension advisory. Simulation results agree well with the kinematics-based advisory, with only minor offsets caused by transient response effects.
\begin{figure}[t!]
\centering
\includegraphics[width=\linewidth]{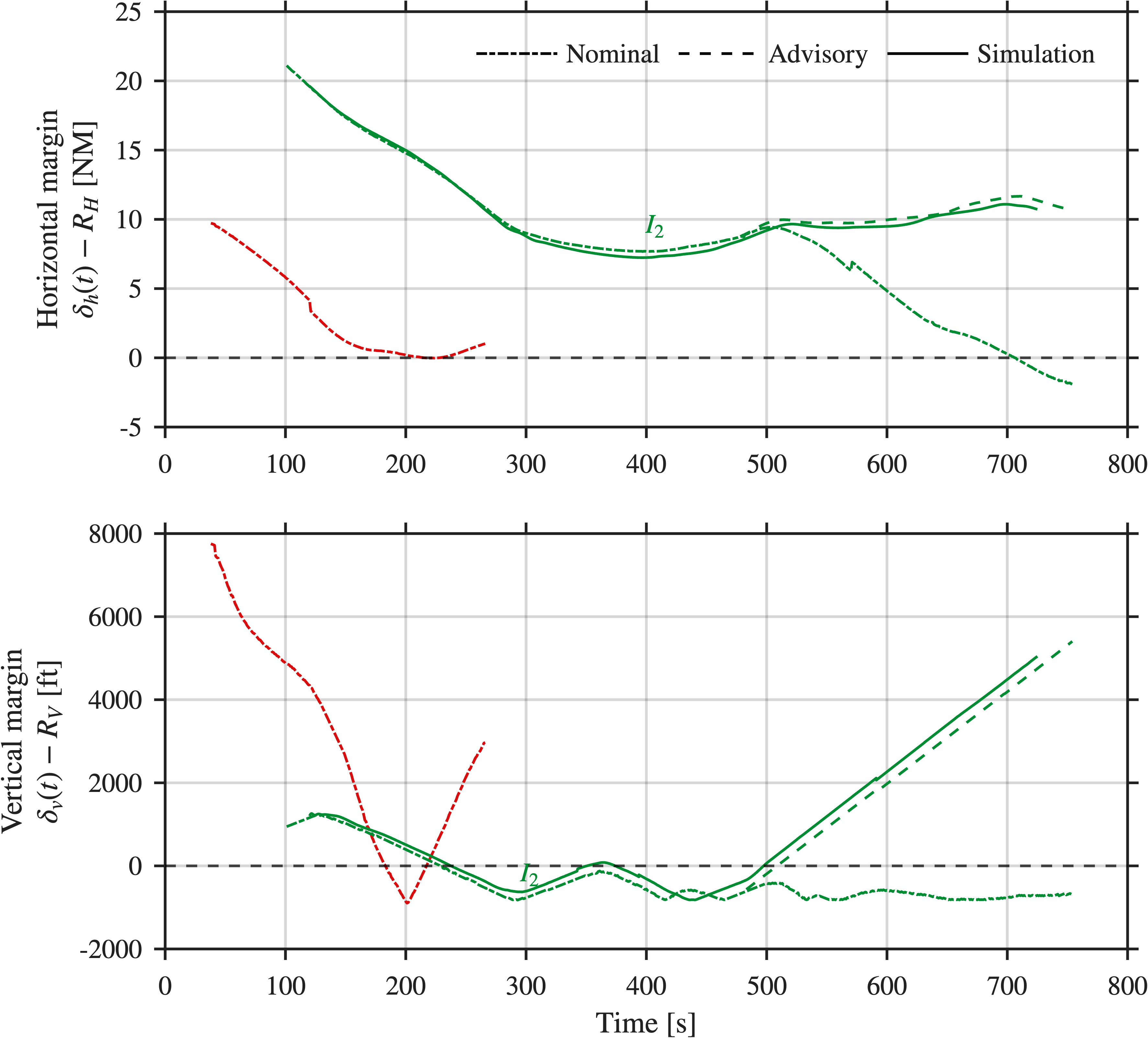}
\caption{Separation margins before and after deconfliction with dynamic simulation comparison.}
\label{fig:460_margins}
\end{figure}
For this specific case, the search-based contingency landing trajectory computation runtime is logged as 630 ms. As $\mathcal{R}_{\mathrm{Halt}}$ runtime is trivial, only $\mathcal{R}_{\mathrm{Extend}}$ contributes to the total computational expense, resulting in a total PAA time of 2.6 s after including the two-way communication delay.

Another example illustrating deconfliction through $\mathcal{R}_{\mathrm{Alt}}$ and $\mathcal{R}_{\mathrm{Divert}}$ is shown in Fig. \ref{fig:30_resolution}, including the corresponding dynamic simulation tracks.
\begin{figure}[t!]
\centering
\includegraphics[width=\linewidth]{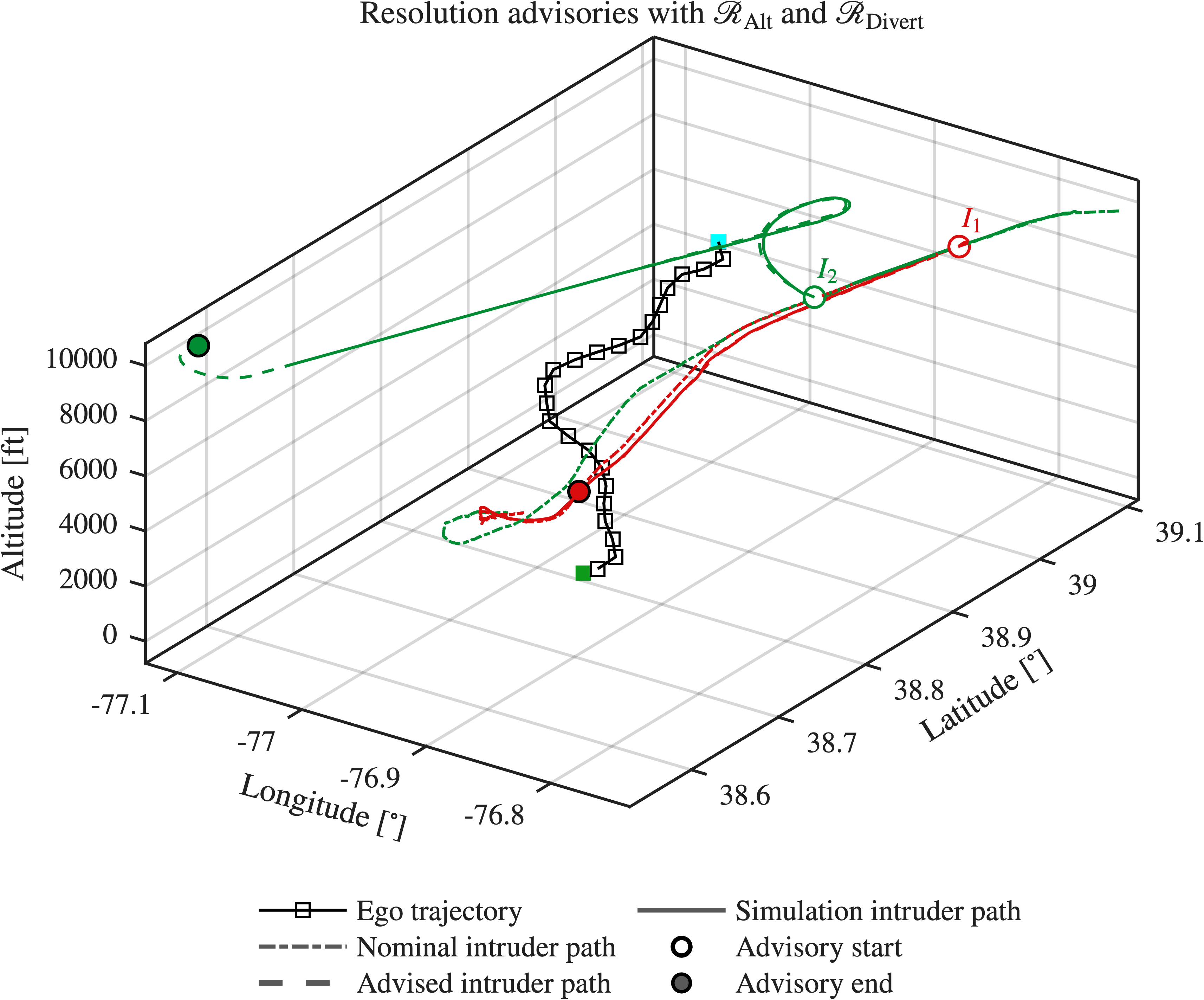}
\caption{Altitude and divert resolution advisories with dynamic realizations.}
\label{fig:30_resolution}
\end{figure}
In this scenario, well-clear separation with $I_1$ is restored in the vertical plane through $\mathcal{R}_{\mathrm{Alt}}$. The advisory commands a descent of $\Delta h_{\mathrm{adv}} = 390$ ft at $\gamma_1 = -3.5^\circ$, followed by a hold at the offset altitude for $t_{\mathrm{hold}} = 70$ s and a subsequent climb at $\gamma_2 = 0.97^\circ$. For intruder $I_2$, $\mathcal{R}_{\mathrm{Divert}}$ is issued because no less intrusive solution is found within the specified runtime limit. The corresponding separation curves are shown in Fig. \ref{fig:30_separation}.

Because $I_1$ is modified only in the vertical plane, the horizontal margins before and after the advisory largely overlap, and the dynamic simulation closely agrees with the advised response. The effect of the advisory is instead reflected in the increased vertical margin. For $I_2$, the divert advisory affects both the horizontal and vertical margins.
\begin{figure}[t!]
\centering
\includegraphics[width=\linewidth]{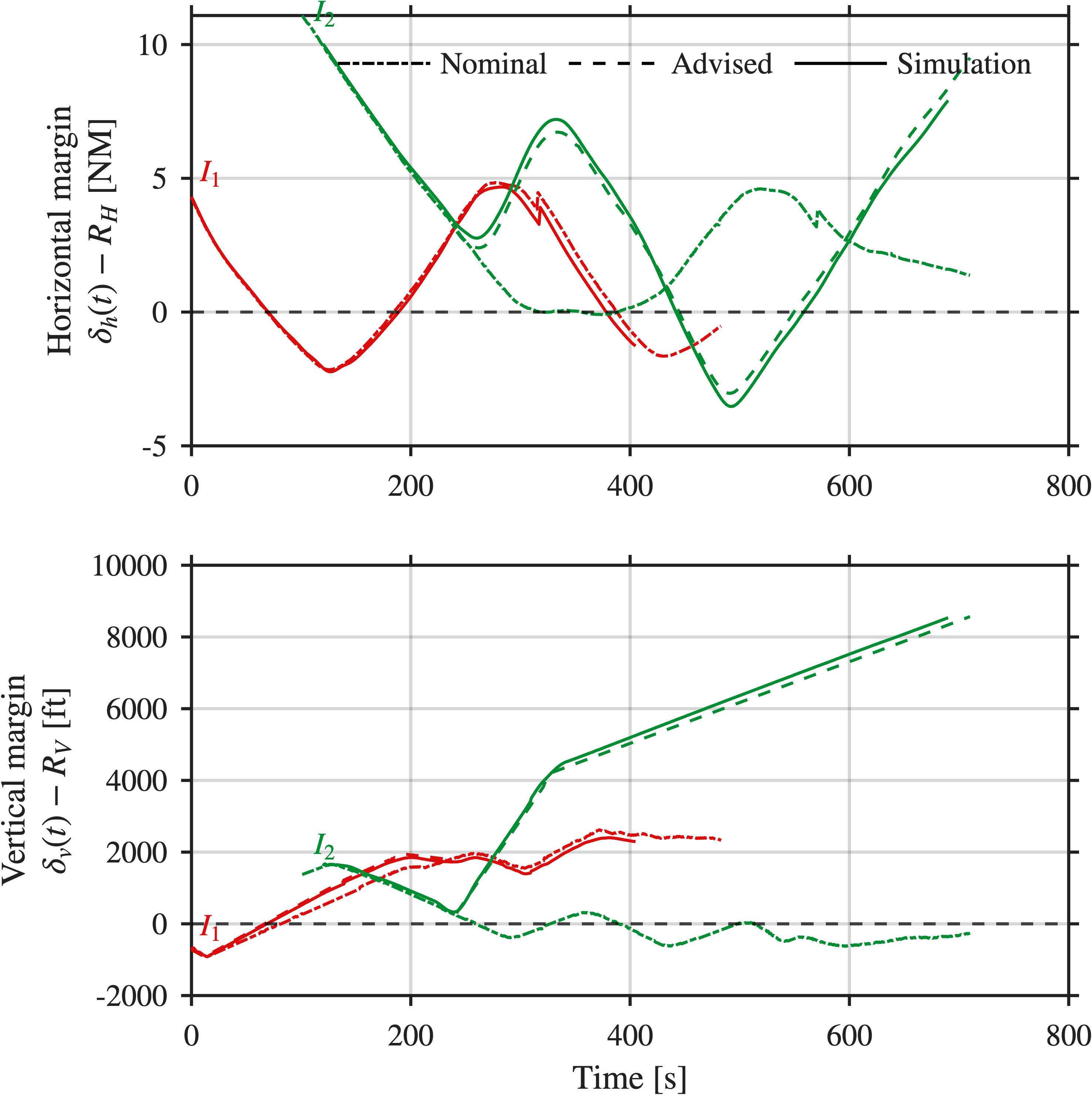}
\caption{Separation margins for the case shown in Fig. \ref{fig:30_resolution}.}
\label{fig:30_separation}
\end{figure}
Overall, the total coordinated response time for this case is approximately 4 s, with path planning, communication, and advisory generation each contributing on the order of one second.

Next, all the LoWC encounters in the planning benchmark solutions are resolved for statistical evaluation. Table~\ref{tab:resolution_summary} summarizes the results. The proposed method returned feasible resolutions for all 575 conflicts, indicating that the predefined advisory set is sufficient for the evaluated scenarios. Most conflicts involved airliner, followed by VTOL intruders, while smaller wing-lift accounted for a minor portion of the encounters. Among the generated advisories, speed regulation is the most common resolution, suggesting that many conflicts could be resolved through temporal separation without requiring spatial trajectory modification. Halt advisories are also frequent because of the chosen ADS-B dataset, whereas altitude, extension, and diversion advisories were used less often when simpler interventions are insufficient.
\begin{table}[t!]
\centering
\caption{Summary of conflict-resolution advisory generation results.}
\label{tab:resolution_summary}
\begin{tabular}{lcc}
\hline \hline
\textbf{Category} & \textbf{Count} & \textbf{Percentage} \\
\hline
\multicolumn{3}{l}{\textit{Resolution outcome}} \\
Feasible & 575 & 100.0$\%$ \\
Infeasible & 0 & 0.0$\%$ \\
\hline
\multicolumn{3}{l}{\textit{Intruder type}} \\
Airliner & 343 & 59.7$\%$ \\
AAM wing-lift & 39 & 6.8$\%$ \\
AAM VTOL & 193 & 33.6$\%$ \\
\hline
\multicolumn{3}{l}{\textit{Advisory type}} \\
Halt & 157 & 27.3$\%$ \\
Speed & 208 & 36.2$\%$ \\
Altitude & 71 & 12.3$\%$ \\
Extend & 36 & 6.3$\%$ \\
Divert & 103 & 17.9$\%$ \\
\hline
\end{tabular}
\end{table}

\begin{table}[t!]
\centering
\caption{Optimized advisory parameter statistics.}
\label{tab:advisory_parameters}
\renewcommand{\arraystretch}{1.2}
\resizebox{\linewidth}{!}{
\begin{tabular}{llcccc}
\hline \hline
\textbf{Advisory type} & \textbf{Parameter} & \textbf{Mean} & \textbf{Std. dev.} & \textbf{Min.} & \textbf{Max.} \\
\hline
\multirow{2}{*}{VTOL speed}
& $\Delta v_{\mathrm{adv}}$ [kts] & -14.5 & 56.5 & -107.1 & 99.0 \\
& $t_{\mathrm{hold}}$ [s] & 131.6 & 96.8 & 5.5 & 485.3 \\
\hline
\multirow{2}{*}{Wing-lift speed}
& $\Delta v_{\mathrm{adv}}$ [kts] & -14.0 & 17.1 & -26.1 & 25.7 \\
& $t_{\mathrm{hold}}$ [s] & 278.1 & 96.0 & 155.4 & 519.8 \\
\hline
\multirow{4}{*}{Altitude} & $\Delta h_{\mathrm{adv}}$ [ft] & 348.1 & 389.5 & -783.6 & 998.9 \\
& $t_{\mathrm{hold}}$ [s] & 128.8 & 93.1 & 1.5 & 393.6 \\
& $\gamma_0$ [deg] & -0.4 & 2.2 & -4.4 & 2.9 \\
& $\gamma_1$ [deg] & -2.7 & 1.6 & -4.7 & 2.6 \\
\hline
Extend & $L$ [NM] & 12.8 & 7.8 & 0.9 & 32.4 \\
\hline
\multirow{2}{*}{Divert} & $L$ [NM] & 23.4 & 11.6 & 9.6 & 40.8 \\
& $\gamma$ [deg] & 2.9 & 0.8 & 0.2 & 4.9 \\
\hline
\end{tabular}
}
\end{table}
Table~\ref{tab:advisory_parameters} summarizes the optimized advisory decision variables for resolutions. The mean speed and altitude changes remain relatively small, indicating that many conflicts are resolved with modest temporal or vertical adjustments. In contrast, extension and diversion advisories are more intrusive since they modify the path length and can therefore affect a larger portion of the intruder mission. The wide parameter ranges also show that the required advisory magnitude is encounter-dependent, varying with intruder type, conflict timing, and contingency landing trajectory.

Moreover, advisory lead time is a crucial parameter for DAA well-clear because advisories must be generated sufficiently early to allow the surrounding aircraft to interpret, accept, and execute the commanded maneuver. RTCA DO--365 Minimum Operational Performance Standards for Detect-and-Avoid Systems uses a 35 s modified tau threshold as part of the DAA well-clear alerting criteria \cite{RTCA_DO365_2017,Wu2020}. Therefore, Table~\ref{tab:onset_to_conflict_stats} reports the advisory lead time, defined as the time from advisory onset to the predicted conflict, and the percentage of advisories generated at least 35 s before the predicted conflict. Overall, 93.5$\%$ of all advisories satisfy this 35 s threshold. These results indicate that the proposed advisory framework generally produces conflict resolution actions within a DAA-relevant temporal horizon.
\begin{table}[t!]
\centering
\caption{Descriptive statistics of advisory lead time $t_{c} - t_{\mathrm{on}}$. All units are in seconds.}
\label{tab:onset_to_conflict_stats}
\begin{tabular}{lccccc}
\hline\hline
\textbf{Advisory}& \textbf{Mean} & \textbf{Std. dev.} & \textbf{Min.} & \textbf{Max.} & \textbf{$\geq 35$ s [\%]} \\
\hline
Speed & 380.1 & 91.7 & 137.8 & 592.9 & 100.0 \\
Altitude & 119.4 & 86.9 & 7.7 & 329.7 & 84.5 \\
Extend & 152.1 & 82.6 & 18.0 & 384.1 & 94.4 \\
Divert & 167.5 & 89.7 & 10.0 & 456.0 & 86.4 \\
\textit{All} &\textit{ 263.8} & \textit{147.1} & \textit{7.7} & \textit{592.9} & \textit{93.5} \\
\hline
\end{tabular}
\end{table}
For cases with advisory lead times below 35 s—6.5$\%$ of all cases, the reduced temporal margin is a consequence of the priority trajectory being declared on short notice. In such cases, earlier coordination is not available by definition: the framework does not pre-plan around a priority trajectory before it is declared, but instead generates the least disruptive feasible advisory for the surrounding traffic given the remaining time. These cases would also violate the DO--365 DAA well-clear temporal threshold, indicating that the conflict is already inside the nominal alerting horizon when the priority trajectory becomes available.

\subsection{Solution Optimality and Computational Performance}
As aforementioned, optimization execution runtime is limited to 1 s for conflict resolution generation for the presented application. This is an early stopping approach to prevent the high computation expense of the optimal advisories, prohibitive for a real-time implementation. It is a similar problem encountered in the search-based trajectory generation for which inadmissible heuristics are employed for suboptimal yet substantially faster convergence. Therefore, optimality is traded for real-time performance that is strictly necessary for forced landing scenarios in multi-agent environments that require swift coordination for safety.

We further investigated how much solution quality is compromised for real-time performance. Fig. \ref{fig:opt_comparison} compares the solutions for the same conflict dataset under 15 s and 1 s optimizer runtime limits.
\begin{figure*}
    \centering
    \includegraphics[width=\linewidth]{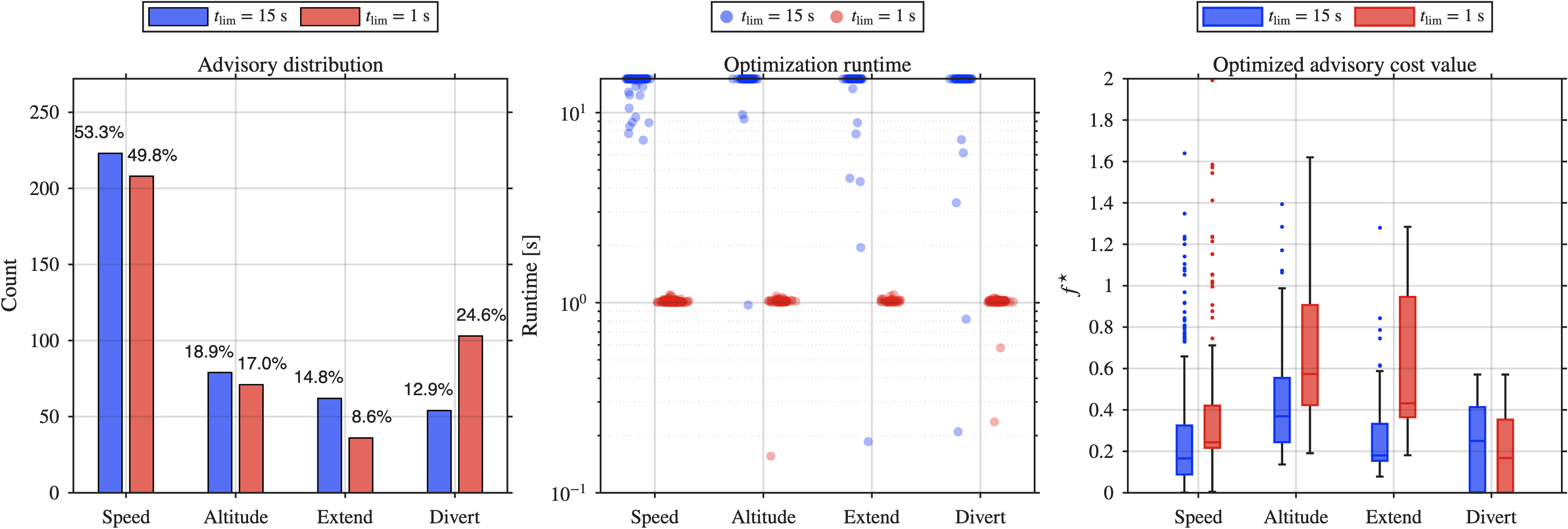}
    \caption{Comparison of optimization outcomes under 15 s and 1 s runtime limits. Advisory distributions are shown excluding halt advisories, with percentages normalized by the total number of plotted advisories for each runtime-limit case.}
    \label{fig:opt_comparison}
\end{figure*}
The comparison suggests that enforcing a 1 s optimizer limit preserves the overall structure of the advisory outcomes while substantially reducing computation time. As expected, the 1 s case exhibits runtime saturation, with most optimization calls terminating near the imposed upper bound. In contrast, the 15 s case often requires several seconds and frequently approaches the longer runtime limit. The additional computation, however, does not fundamentally change the dominant resolution modes. Speed regulation remains the most common advisory type, and altitude regulation appears with comparable relative frequency in both cases. The most noticeable change is the increase in diversion advisories under the 1 s limit, accompanied by a reduction in extension advisories. This indicates that early stopping can shift some encounters toward more intrusive feasible advisories in the ordered set. The optimized objective values show that the longer runtime can improve individual solutions, particularly by reducing the cost for some speed, altitude, and extension advisories. Yet, the aggregate improvement is modest relative to the increase in computational burden. The 1 s limit therefore provides a practical real-time compromise: it maintains a similar advisory distribution and acceptable objective quality while enforcing the latency needed for conflict resolution in forced landing scenarios with multi-agent coordination.

Figure~\ref{fig:paa_runtime} shows the end-to-end runtime of the Plan--and--Avoid timeline, including path planning, datalink delays, and resolution advisory (RA) generation. Path planning runtime is reported as the mean plus three standard deviations, while RA runtime is reported as the worst case across test cases. After the priority trajectory is planned, advisory candidates are evaluated in parallel.
\begin{figure}[t!]
    \centering
    \includegraphics[width=\linewidth]{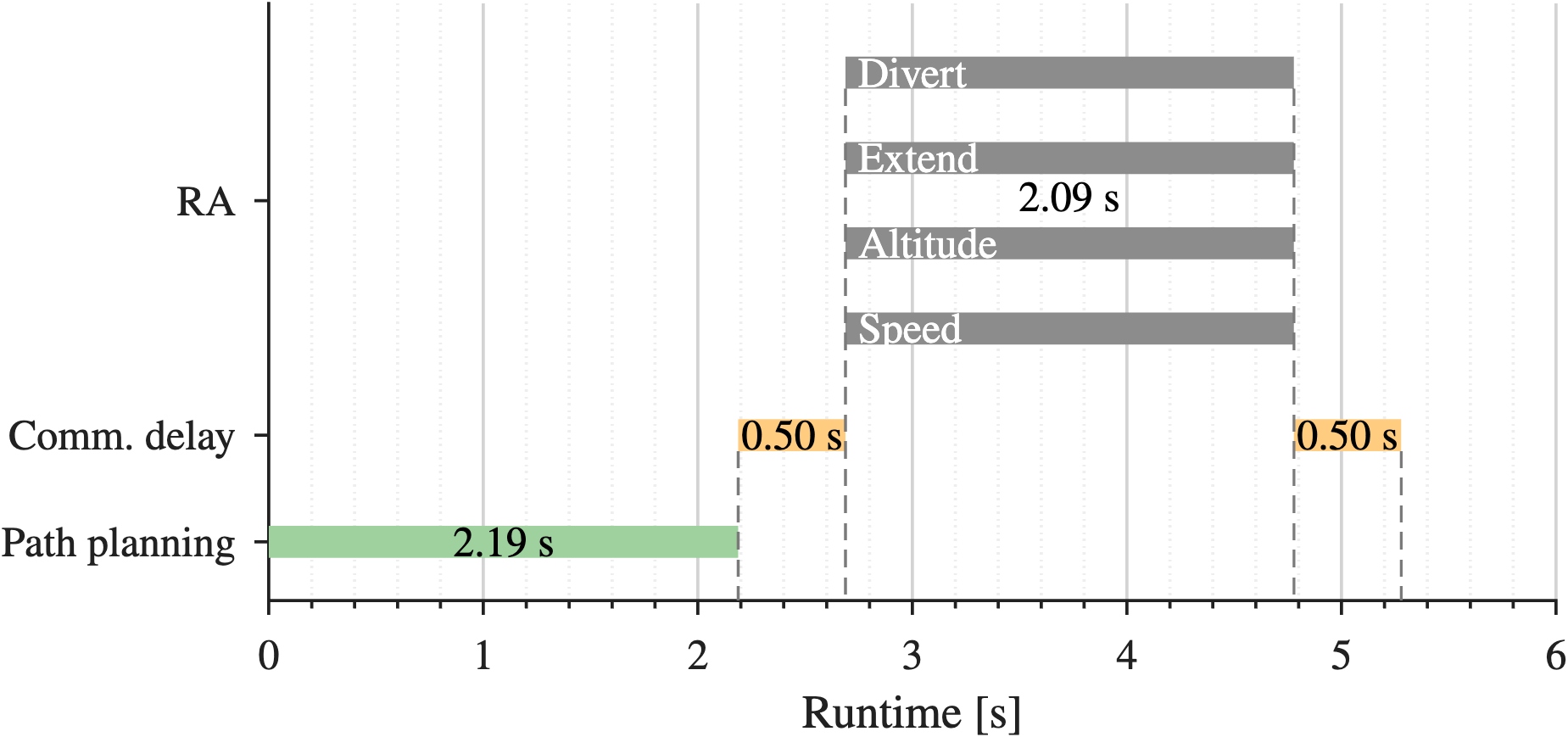}
    \caption{Plan-and-Avoid execution runtime: path planning runtime is reported as mean plus three standard deviations; resolution advisory (RA) runtime is reported as the worst case across test cases.}
    \label{fig:paa_runtime}
\end{figure}
Finally, end-to-end PAA runtime distribution is plotted in Fig. \ref{fig:paa_runtime_dist}.
\begin{figure}[t!]
    \centering
    \includegraphics[width=\linewidth]{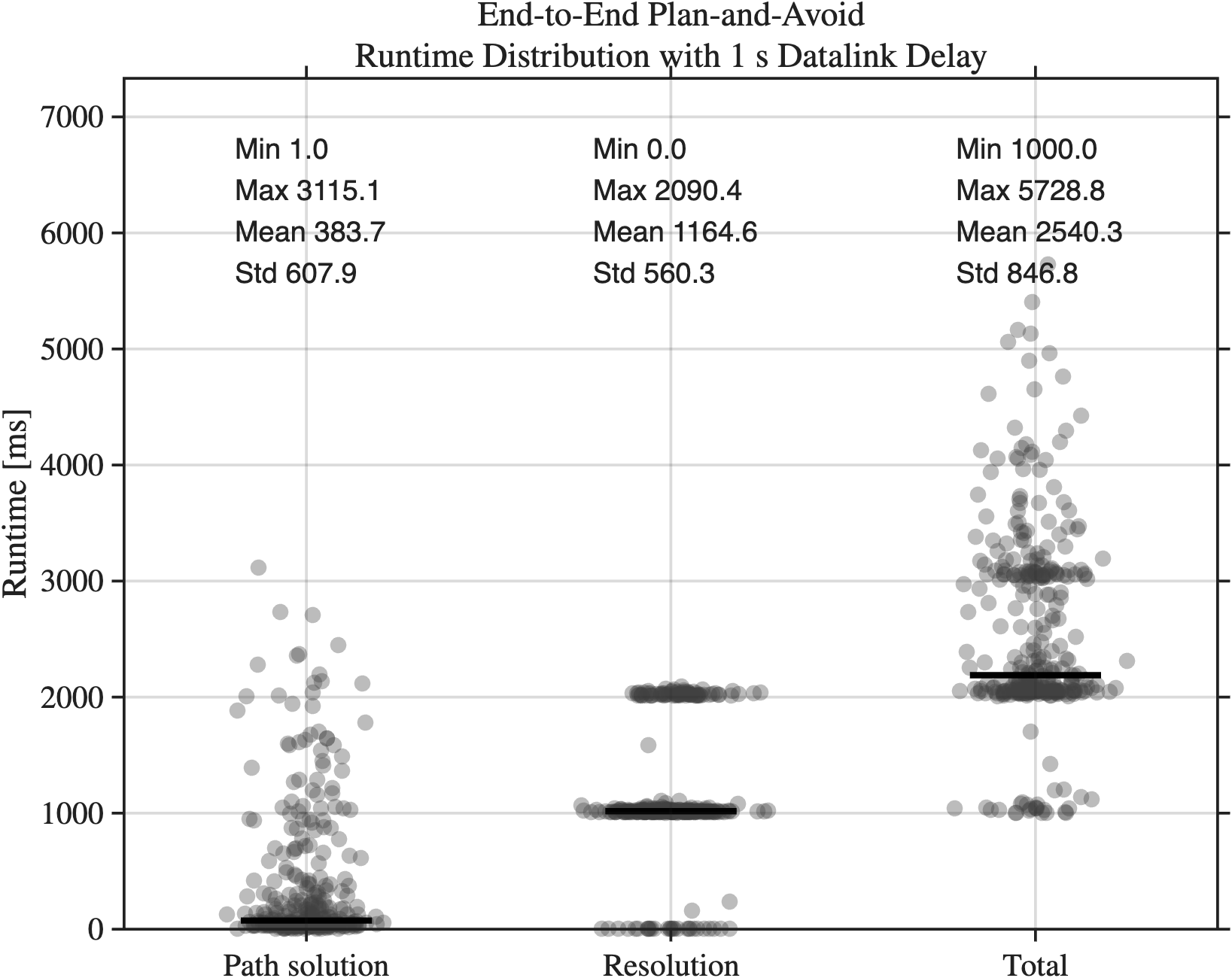}
    \caption{End-to-end runtime distribution of the Plan-and-Avoid pipeline for cases with LoWC occurrence, including a two-way 1 s datalink delay.}
    \label{fig:paa_runtime_dist}
\end{figure}
The runtime results demonstrate that the proposed PAA framework can generate conflict-aware contingency responses within a few seconds, even after including a 1 s datalink delay. This fast response is essential in forced landing scenarios, where the aircraft state, reachable landing options, and surrounding traffic evolve continuously. From an operational perspective, this is the key result: the system does not merely find conflict-aware solutions, but does rapidly to be relevant during an emergency.

%% file: Sections/7_Conclusion.tex
\section{Discussion}
\label{sec:disc}
The proposed PAA method is intended for cooperative or managed airspace where surrounding agents can share trajectory intent and receive high-level advisories through datalink communication. This assumption is consistent with emerging concepts for coordinated AAM, UAS traffic management, and fleet-operated autonomous aircraft, and is critical for safety in densely occupied airspace volumes.  However, this assumption does not support noncooperative scenarios. Also, the ADS-B trajectories used in the benchmark should be interpreted as realistic dynamic agent motion traces rather than a complete representation of future sUAS or AAM traffic. Their value lies in providing noisy, time-varying, multi-agent airspace interactions at a scale that is difficult to reproduce through flight experiments.

The case given in Fig. \ref{fig:30_resolution} illustrates the role of the proposed PAA framework as a decision-support tool for traffic management rather than as a last-second collision avoidance system. Recall that TCAS- and ACAS-X-type systems are effective for reactive separation recovery during nominal operations, but they are primarily designed around pairwise, reciprocal advisories between cooperative aircraft. The proposed PAA framework accommodates new or modified priority trajectories by generating unilateral advisories for surrounding traffic in a form consistent with current Air Traffic Control (ATC) practice: one intruder, $I_1$, is assigned a modified lower-altitude profile to restore vertical separation, while the second intruder, $I_2$, is directed to enter a holding maneuver to remove the temporal conflict. These actions are interpretable, constraint-aware, and operationally recognizable, resembling the type of instructions a controller could issue after assessing the encounter geometry. The key advantage is that the assessment and advisory selection are performed in milliseconds considering disruption, allowing the system to translate multi-agent conflict predictions into clear ATC-compatible resolutions much faster than human communication and coordination alone. Thus, the framework can be viewed as an explainable decision layer that supports controllers during high-workload contingency events while preserving the structure of existing air traffic procedures.

The proposed Plan--and--Avoid framework should be interpreted as complementary to, rather than a replacement for, conventional Detect--and--Avoid systems. PAA assumes that a priority trajectory is declared and that surrounding agents provide trajectory intent, allowing conflicts along the declared trajectory to be predicted and resolved explicitly through advisories computed and issued typically further in advance than is possible with DAA. However, stochastic effects such as unexpected changes in intruder intent cannot be fully eliminated at the planning layer. In such cases, DAA remains necessary as a downstream safety layer for monitoring the realized encounter geometry and detecting conflicts that emerge after the advisory has been issued. This distinction also clarifies the role of modified tau. In DO--365-based DAA logic, modified tau is used as a temporal alerting metric for identifying loss of DAA well clear. In the proposed framework, modified tau is not directly used as an optimization variable because all predicted conflicts along the priority trajectory are evaluated and resolved within the available planning horizon. Thus, PAA provides proactive coordination based on declared intent, while DAA provides reactive protection against uncertainty and unmodeled deviations.

The results illustrate the tradeoff between optimality and real-time feasibility. Both the contingency planner and the advisory optimizer intentionally sacrifice global optimality to satisfy the latency requirements of forced landing scenarios. This is not merely a computational convenience; it is part of the safety argument. A globally optimal advisory that requires excessive computation may be operationally irrelevant during an emergency, whereas a feasible and interpretable advisory generated within a few seconds can support timely coordination. The comparison between 1 s and 15 s optimizer limits suggests that longer optimization can improve individual objective values, but the aggregate improvement is modest relative to the additional computational burden. Moreover, the reported runtimes are obtained on a personal computer, representing a conservative onboard or edge-computing implementation. In a deployed traffic management architecture, advisory candidates could be distributed across high-performance ground or cloud-computing resources, allowing larger optimization budgets, denser candidate sampling, or parallel evaluation of multiple advisory classes within the same wall-clock time. Under such computing assumptions, near-optimal or even globally optimal advisory solutions may be achievable within a 1 s decision window for many encounters. Therefore, the present implementation should be interpreted as a latency-constrained baseline: it demonstrates that feasible and operationally interpretable advisories can be generated in real time without relying on specialized computing, while leaving additional optimality gains available through parallel or cloud-enabled deployment.

\section{Conclusions and Future Work}
\label{sec:conc}
This paper establishes Plan--and--Avoid as a real-time coordination framework for managing priority trajectories in cooperative multi-agent airspace. The main contribution is the integration of two complementary capabilities: conflict-aware priority trajectory generation and vehicle-constrained advisory generation for surrounding traffic. The framework is evaluated using forced-landing priority trajectories with real-world airspace data and dynamic multi-agent simulations.  Results show that conflict-aware search reduces loss-of-well-clear exposure relative to geometric baselines, while the advisory module resolves remaining conflicts through feasible modifications to surrounding traffic. The majority of advisories found for the benchmark set satisfy the RTCA DO-365 Detect--and--Avoid modified tau requirement, supporting the operational relevance of the generated resolutions. Worst-case planning, resolution, and coordination runtime is 5.7 s on a personal computer, demonstrating real-time applicability. Overall, Plan--and--Avoid provides a practical structure for priority trajectory operations by combining fast trajectory evaluation, explainable advisory generation, and system-level coordination through datalink within seconds.

Future work will extend the proposed Plan--and--Avoid framework in several directions. One direction is incorporating intent prediction and anytime replanning, since surrounding agents may deviate from their expected trajectories during priority operations. The framework should also be evaluated from the perspective of surrounding agents, since advisories may introduce delays, additional energy consumption, and/or disruptions that propagate through the broader multi-agent terminal area airspace region.  Additionally, consideration of multiple faults or conflict scenarios will require a tradeoff between contingency plans. That is, if there are imminent collisions predicted with both the aircraft and ground, prioritization needs to be handled carefully. Finally, flight experiments and human-operator studies are needed to assess how the framework can be certified and transitioned to practice.